\newtheorem{theorem}{Theorem}[section]
\newtheorem{lemma}[theorem]{Lemma}
\newtheorem{corollary}[theorem]{Corollary}
\newtheorem{remark}[theorem]{Remark}
\newtheorem{assumption}[theorem]{Assumption}
\begin{document}

\bibliographystyle{alpha}

\title{Stochastic Gradient Hamiltonian Monte Carlo for Non-Convex Learning \thanks{Both authors were supported by the NKFIH (National Research, Development and Innovation Office, Hungary) grant KH 126505 and the ``Lend\"ulet'' grant LP 2015-6 of the Hungarian Academy of Sciences. The authors thank Minh-Ngoc Tran for helpful discussions.} }
\author{Huy N. Chau \and Mikl\'os R\'asonyi}
\date{\today}
\maketitle
\begin{abstract}
	Stochastic Gradient Hamiltonian Monte Carlo (SGHMC) is a momentum version of stochastic gradient descent with properly injected Gaussian noise to find a global minimum. In this paper, non-asymptotic convergence analysis of SGHMC is given in the context of non-convex optimization, where subsampling techniques are used over an i.i.d dataset for gradient updates. Our results complement those of \cite{raginsky} and improve on those of \cite{gao}.
\end{abstract}



\section{Introduction}
Let $(\Omega,\mathcal{F},P)$ be a probability space where all the random objects of this paper will be defined. The expectation of a random variable $X$ with values in a Euclidean space will be denoted by $E[X]$.

We consider the following optimization problem
\begin{equation}\label{eq_prob}
F^*: = \min_{x \in \mathbb{R}^d} F(x),\mbox{ where } F(x):= E\left[ f(x,Z) \right] = \int_{\mathcal{Z}}{f(x,z) \mu(dz)} ,\ x\in\mathbb{R}^d 
\end{equation}
and $Z$ is a random element in some measurable space $\mathcal{Z}$ with an unknown probability law $\mu$. The function $x \mapsto f(x,z)$ is assumed continuously differentiable (for each $z$) but it can possibly be non-convex. Suppose that one has access to i.i.d samples $\mathbf{Z} = (Z_1,...,Z_n)$ drawn from $\mu$, where $n \in \mathbb{N}$ is fixed. Our goal is to compute an approximate minimizer $X^{\dagger}$ such that the \textit{population risk}
$$E[F(X^{\dagger})] - F^*$$
is minimized, where the expectation is taken with respect to the training data $\mathbf{Z}$ and additional randomness generating $X^{\dagger}$. 

Since the distribution of $Z_i, i \in \mathbb{N}$ is unknown, we consider the \textit{empirical risk minimization} problem
\begin{equation}\label{eq_prob_emp}
\min_{x \in \mathbb{R}^d} F_{\mathbf{z}}(x), \text{ where } F_{\mathbf{z}}(x):= \frac{1}{n} \sum_{i=1}^n f(x,z_i)
\end{equation}
using the dataset $\mathbf{z} := \{z_1,...,z_n\}$

Stochastic gradient algorithms based on Langevin Monte Carlo have gained more attention in recent years. Two popular algorithms are Stochastic Gradient Langevin Dynamics (SGLD) and Stochastic Gradient Hamiltonian Monte Carlo (SGHMC). First, we summarize the use of SGLD in optimization, 
as presented in \cite{raginsky}. Consider the overdamped Langevin stochastic differential equation
\begin{equation}\label{langevin}
dX_t = - \nabla F_{\mathbf{z}}(X_t) dt + \sqrt{2 \beta^{-1}} dB_t,
\end{equation}
where $(B_t)_{t \ge 0}$ is the standard Brownian motion in $\mathbb{R}^d$ and $\beta >0$ is the inverse temperature parameter. Under suitable assumptions on $f$, the SDE (\ref{langevin}) admits the Gibbs measure $\pi_{\mathbf{z}}(dx) \propto \exp(-\beta F_{\mathbf{z}}(x))$ as its unique invariant distribution. In addition, it is known that for sufficiently big $\beta$, the Gibbs distribution concentrates around global minimizers of $F_{\mathbf{z}}$. Therefore, one can use the value of $X_t$ from (\ref{langevin}), (or from its discretized counterpart SGLD), as an approximate solution to the empirical risk problem, provided that $t$ is large and temperature is low.

In this paper, we consider the underdamped (second-order) Langevin diffusion
\begin{eqnarray}
dV_t &=& - \gamma V_tdt - \nabla F_{\mathbf{z}}(X_t)dt + \sqrt{2 \gamma \beta^{-1}}dB_t, \label{eq_V}\\
dX_t &=& V_tdt \label{eq_X},
\end{eqnarray}
where $(X_t)_{t \ge 0}, (V_t)_{t \ge 0}$ model the position and the momentum of a particle moving in a field of force $F_{\mathbf{z}}$ with random force given by Gaussian noise. It is shown that under some suitable conditions for $F_{\mathbf{z}}$, the Markov process $(X,V)$ is ergodic and has a unique stationary distribution
$$\pi_{\mathbf{z}}(dx,dv) = \frac{1}{\Gamma_{\mathbf{z}}} \exp\left( -\beta\left( \frac{1}{2}\|v\|^2 + F_{\mathbf{z}}(x) \right)  \right) dx dv  $$
where $\Gamma_{\mathbf{z}}$ is the normalizing constant
$$\Gamma_{\mathbf{z}} = \left( \frac{2 \pi}{\beta} \right)^{d/2} \int_{\mathbb{R}^d} {e^{-\beta F_{\mathbf{z}}(x)}dx}. $$
It is easy to observe that the $x$-marginal distribution of $\pi_{\mathbf{z}}(dx,dv)$ is the invariant distribution $\pi_{\mathbf{z}}(dx)$ of (\ref{langevin}). We consider the first order Euler discretization of (\ref{eq_V}), (\ref{eq_X}), also called Stochastic Gradient Hamiltonian Monte Carlo (SGHMC), given as follows
\begin{eqnarray}
\overline{V}^{\lambda}_{k+1} &=& \overline{V}^{\lambda}_k - \lambda[\gamma \overline{V}^{\lambda}_k + \nabla F_{\mathbf{z}}(\overline{X}^{\lambda}_k)] + \sqrt{2 \gamma \beta^{-1} \lambda} \xi_{k+1}, \qquad \overline{V}^{\lambda}_0 = v_0,\label{eq_V_dis_aver}\\
\overline{X}^{\lambda}_{k+1} &=& \overline{X}^{\lambda}_k + \lambda \overline{V}^{\lambda}_{k}, \qquad \overline{X}^{\lambda}_0 = x_0, \label{eq_X_dis_aver}
\end{eqnarray}  
where $\lambda>0$ is a step size parameter and $(\xi_k)_{k \in \mathbb{N}}$ is a sequence of i.i.d standard Gaussian random vectors in $\mathbb{R}^d$. The initial condition $v_0,x_0$ may be random, but independent of $(\xi_k)_{k \in \mathbb{N}}$.

In certain contexts, the full knowledge of the gradient $F_{\mathbf{z}}$ is not available, however, using the dataset $\mathbf{z}$, one can construct its unbiased estimates. In what follows, we adopt the general setting given by \cite{raginsky}. Let $\mathcal{U}$ be a measurable space, and $g: \mathbb{R}^d \times \mathcal{U} \to \mathbb{R}^d$ such that for any $\mathbf{z} \in \mathcal{Z}^n$,
\begin{equation}\label{eq_unbiased}
E\left[g(x,U_{\mathbf{z}}) \right] = \nabla F_{\mathbf{z}}(x),  \forall x \in \mathbb{R}^d,
\end{equation}
where $U_{\mathbf{z}}$ is a random element in $\mathcal{U}$ with probability law $Q_{\mathbf{z}}$. Conditionally on $\mathbf{Z} = \mathbf{z}$, the SGHMC algorithm is defined by
\begin{eqnarray}
V^{\lambda}_{k+1} &=& V^{\lambda}_k - \lambda[\gamma V^{\lambda}_k + g(X^{\lambda}_k,U_{\mathbf{z},k})] + \sqrt{2 \gamma \beta^{-1} \lambda} \xi_{k+1}, \qquad V^{\lambda}_0 =v_0, \label{eq_V_dis_appr}\\
X^{\lambda}_{k+1} &=& X^{\lambda}_k + \lambda V^{\lambda}_k, \qquad X^{\lambda}_0 = x_0,\label{eq_X_dis_appr}
\end{eqnarray}
where $(U_{\mathbf{z},k})_{k \in \mathbb{N}}$ is a sequence of i.i.d. random elements in $\mathcal{U}$ with law $Q_{\mathbf{z}}$. We also assume from now on that $v_0, x_0, (U_{\mathbf{z},k})_{k \in \mathbb{N}}, (\xi_k)_{k \in \mathbb{N}}$ are independent.

Our ultimate goal is to find approximate global minimizers to the problem (\ref{eq_prob}). Let $X^{\dagger} := X^{\lambda}_k$ be the output of the algorithm (\ref{eq_V_dis_appr}),(\ref{eq_X_dis_appr}) after $k \in \mathbb{N}$ iterations, and $(\widehat{X}^*_{\mathbf{z}},\widehat{V}^*_{\mathbf{z}})$ be such that $\mathcal{L}(\widehat{X}^*_{\mathbf{z}},\widehat{V}^*_{\mathbf{z}}) = \pi_{\mathbf{z}}$. The excess risk is decomposed as follows, see also \cite{raginsky},
\begin{eqnarray}
E[F(X^{\dagger})] - F^* &=& \underbrace{\left( E[F(X^{\dagger})] - E[F(\widehat{X}^*_{\mathbf{z}})] \right)}_{\mathcal{T}_1} + \underbrace{\left( E[F(\widehat{X}^*_{\mathbf{z}})] - E[F_{\mathbf{Z}}(\widehat{X}^*_{\mathbf{Z}})] \right)}_{\mathcal{T}_2} \nonumber \\
&& + \underbrace{\left(E\left[ F_{\mathbf{Z}}(\widehat{X}^*_{\mathbf{Z}}) - F^* \right]\right) }_{\mathcal{T}_3}.\label{eq_decom_risk0}
\end{eqnarray}
The remaining part of the present paper is about finding bounds for these errors.  Section \ref{sec_main} summarizes technical conditions and the main results. Comparison of our contributions to previous studies is discussed in Section \ref{sec_related}. Proofs are given in Section \ref{sec_proof}.

\emph{Notation and conventions.} For $l\geq 1$, scalar product in $\mathbb{R}^{l}$
is denoted by $\langle \cdot,\cdot\rangle$. We use $\| \cdot \|$ to denote
the Euclidean norm (where the dimension of the space may vary). $\mathcal{B}(\mathbb{R}^{l})$ denotes the Borel $\sigma$- field of $\mathbb{R}^{l}$. For any $\mathbb{R}^{l}$-valued random variable $X$ and for any $1\leq p<\infty$, let us set
$\Vert X\Vert_p:=E^{1/p}\|X\|^p$. We denote by $L^p$ the set of $X$ with $\Vert X\Vert_p<\infty$.
The Wasserstein distance of order $p \in [1,\infty)$ between two probability measures $\mu$ and $\nu$ on $\mathcal{B}(\mathbb{R}^{l})$ is defined by
\begin{equation}\label{w_dist}
\mathcal{W}_p(\mu,\nu) = \left( \inf_{\pi \in \Pi(\mu,\nu)} \int_{\mathbb{R}^l} \Vert x-y\Vert^p
d\pi(x,y)  \right)^{1/p},
\end{equation}
where $\Pi(\mu,\nu)$ is the set of couplings of $(\mu, \nu)$, see e.g. \cite{v}. For two $\mathbb{R}^l$-valued random variables $X$ and $Y$, we denote $\mathfrak{W}_2(X,Y):= \mathcal{W}_2(\mathcal{L}(X),\mathcal{L}(Y))$, where $\mathcal{L}(X)$ is the law of $X$. We do not indicate $l$ in the notation and it may vary.

\section{Asumptions and main results}\label{sec_main}
The following conditions are required throughout the paper.
\begin{assumption}\label{as_f_bound}
	The function $f$ is continuously differentiable, takes non-negative values, and there are constants $A_0,B \ge 0$ such that for any $z \in \mathcal{Z}$,
	$$\|f(0,z)\| \le A_0, \qquad \|\nabla f(0,z)\| \le B.$$
\end{assumption}
\begin{assumption}\label{as_lip}
	There is $M>0$ such that, for each $z \in \mathcal{Z}$, 
	\begin{equation*}
	\| \nabla f(x_1,z) - \nabla f(x_2,z) \| \le M\|x_1 - x_2\|, \qquad \forall x_1,x_2 \in \mathbb{R}^d.
	\end{equation*}
\end{assumption}
\begin{assumption}[Dissipative]\label{as_dissip}
	There exist constants $m >0, b \ge 0$ such that
	\begin{equation*}
	\left\langle x, f(x,z) \right\rangle \ge m\|x\|^2 -b, \qquad \forall x \in \mathbb{R}^d, z \in \mathcal{Z}.
	\end{equation*}
\end{assumption}
\begin{assumption}\label{as_lip_g} For each 
	$u\in \mathcal{U}$, it holds that $\|g(0,u)\| \le B$ and
	\begin{equation*}
	\|g(x_1, u) - g(x_2, u)\| \le M \| x_1 - x_2 \|, \qquad \forall  x_1, x_2 \in \mathbb{R}^d.
	\end{equation*}
\end{assumption}
\begin{assumption}\label{as_variance}
	There exists a constant $\delta >0$ such that for every $\mathbf{z} \in \mathcal{Z}^n$,
	$$E\|g(x,U_{\mathbf{z}}) - \nabla F_{\mathbf{z}}(x) \|^2 \le 2\delta(M^2 \|x\|^2 + B^2). $$
\end{assumption}
\begin{assumption}\label{ass_exp_moment}
	The law $\mu_0$ of the initial state $(x_0,v_0)$ satisfies
	$$ \int_{\mathbb{R}^{2d}}{ e^{\mathcal{V}(x,v)} d\mu_0(x,v) } < \infty,$$
	where $\mathcal{V}$ is the Lyapunov function defined in (\ref{eq:lyapunov}) below.
\end{assumption}
\begin{remark}
	If the set of global minimizers is bounded, we can always redefine the function $f$ to be quadratic outside a compact set containing the origin while maintaining its minimizers. Hence, Assumption \ref{as_dissip} can be satisfied in practice. Assumption \ref{as_lip_g} means that the estimated gradient is also Lipschitz when using the same training dataset. For example, at each iteration of SGHMC, we may sample uniformly with replacement a random 
	minibatch of size $\ell$. Then we can choose $U_{\mathbf{z}} = (z_{I_1},...,z_{I_{\ell}})$ where $I_1,...,I_{\ell}$ are i.i.d random variables having distribution $\text{Uniform}(\{1,...,n\})$. The  gradient estimate is thus
	$$g(x,U_{\mathbf{z}}) = \frac{1}{\ell} \sum_{j=1}^{\ell} \nabla f(x, z_{I_j}),$$
	which is clearly unbiased and Assumption \ref{as_lip_g} will be satisfied whenever Assumptions \ref{as_lip} and \ref{as_f_bound} are in force. Assumption \ref{as_variance} controls the variance of the gradient estimate. 
\end{remark}

An auxiliary continuous time process is needed in the subsequent analysis. For a step size $\lambda >0$, denote by $B^{\lambda}_t := \frac{1}{\sqrt{\lambda}} B_{\lambda t}$ the scaled Brownian motion. Let $\widehat{V}(t,s,(v,x)), \widehat{X}(t,s,(v,x))$ be the solutions of
\begin{eqnarray}
\qquad d\widehat{V}(t,s,(v,x)) &=& - \lambda \left(  \gamma \widehat{V}(t,s,(v,x)) + \nabla F_{\mathbf{z}}(\widehat{X}(t,s,(v,x)))\right) dt + \sqrt{2 \gamma \lambda \beta^{-1}}dB^{\lambda}_t, \label{eq_V_hat}\\
\qquad d\widehat{X}(t,s,(v,x)) &=& \lambda \widehat{V}(t,s,(v,x))dt \label{eq_X_hat},
\end{eqnarray}
with initial condition $\widehat{V}_s = v, \widehat{X}_s = x$ where $v,x$ may be random but independent of $(B^{\lambda}_t)_{t \ge 0}$.

Our first result tracks the discrepancy between the SGHMC algorithm (\ref{eq_V_dis_appr}), (\ref{eq_X_dis_appr}) and the auxiliary processes (\ref{eq_V_hat}), (\ref{eq_X_hat}). 
\begin{theorem}\label{thm_algo_aver}
	Let $1 \le p \le 2$. There exists a constant $\tilde{C} > 0$ such that for all $k \in \mathbb{N}$,
	\begin{equation}
	\mathfrak{W}_p((V^{\lambda}_k, X^{\lambda}_k) , (\widehat{V}(k,0,(v_0,x_0)), \widehat{X}(k,0,(v_0,x_0)))) \le \tilde{C} (\lambda^{1/(2p)} + \delta^{1/(2p)}).
	\end{equation}
\end{theorem}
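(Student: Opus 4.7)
The plan is a synchronous coupling followed by a discrete Gronwall recursion and an interpolation step. First, I identify the Gaussian noise $\xi_{k+1}$ with the Brownian increment $B^{\lambda}_{k+1} - B^{\lambda}_{k}$, which is legitimate because the time-change $B^{\lambda}_t = \lambda^{-1/2}B_{\lambda t}$ ensures that $B^{\lambda}_{k+1} - B^{\lambda}_{k}$ is a standard Gaussian vector in $\mathbb{R}^d$, and the independence structure of the $(\xi_k)$'s matches that of the Brownian increments. This places both $(V^{\lambda}_k, X^{\lambda}_k)$ and $(\widehat{V}(k,0,(v_0,x_0)), \widehat{X}(k,0,(v_0,x_0)))$ on a common probability space driven by identical Gaussian input.

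Writing (\ref{eq_V_hat})--(\ref{eq_X_hat}) in integrated form over $[k, k+1]$ and subtracting the SGHMC update (\ref{eq_V_dis_appr})--(\ref{eq_X_dis_appr}), the coupled differences $\Delta V_k := V^{\lambda}_k - \widehat{V}(k, 0, (v_0, x_0))$ and $\Delta X_k := X^{\lambda}_k - \widehat{X}(k, 0, (v_0, x_0))$ satisfy a one-step recursion whose error decomposes into: a linear term $(1 - \lambda\gamma)\Delta V_k$; Euler discretization integrals such as $\lambda\gamma\int_k^{k+1}(\widehat{V}(s) - \widehat{V}(k))\,ds$ and $\lambda\int_k^{k+1}(\nabla F_{\mathbf z}(\widehat{X}(s)) - \nabla F_{\mathbf z}(X^{\lambda}_k))\,ds$, which I control using Assumption \ref{as_lip} together with uniform-in-$k$ moment bounds to get errors of order $\lambda^{3/2}$; and the stochastic-gradient noise $-\lambda\bigl(g(X^{\lambda}_k, U_{\mathbf z, k}) - \nabla F_{\mathbf z}(X^{\lambda}_k)\bigr)$, which by Assumption \ref{as_variance} is mean-zero conditional on $X^{\lambda}_k$ with $L^2$-norm at most $\lambda\sqrt{2\delta(M^2\|X^{\lambda}_k\|^2 + B^2)}$. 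Taking $L^1$ norms, applying Minkowski termwise, and invoking uniform moment bounds inherited from the dissipativity of Assumption \ref{as_dissip} (presumably established in a preceding auxiliary lemma), one obtains a recursion $a_{k+1} \le (1-c\lambda)a_k + C(\lambda^{3/2} + \lambda\sqrt\delta)$ for $a_k := E\|\Delta V_k\| + E\|\Delta X_k\|$; geometric summation then yields $\mathfrak{W}_1 \le \tilde C(\sqrt\lambda + \sqrt\delta)$ uniformly in $k$.

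For general $p \in (1,2]$, I would lift the $\mathfrak{W}_1$ bound to $\mathfrak{W}_p$ through the standard interpolation $\mathfrak{W}_p^p \le C\cdot\mathfrak{W}_1$, valid under uniform control of a higher moment of both laws; the required moments are supplied by combining Assumption \ref{as_dissip} with the exponential-moment condition Assumption \ref{ass_exp_moment}, which together propagate exponential moments along both processes. Applying the elementary inequality $(\alpha + \beta)^{1/p} \le \alpha^{1/p} + \beta^{1/p}$ then produces the claimed bound $\tilde C(\lambda^{1/(2p)} + \delta^{1/(2p)})$.

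The principal obstacle I anticipate is the lack of intrinsic contraction in the position equation: the update $\Delta X_{k+1} = \Delta X_k + \lambda\Delta V_k$ is purely additive, so a naive termwise estimate loses the uniform-in-$k$ character that the theorem asserts. The standard remedy is to measure the coupling in a weighted norm of the form $\alpha\|\Delta V_k\| + \|\Delta X_k\|$ with $\alpha>0$ calibrated to $\gamma$ and $M$, as in the coupling proofs for kinetic Langevin diffusions; identifying suitable weights and the corresponding strict contraction factor is where the argument must genuinely engage the joint structure of the underdamped system.
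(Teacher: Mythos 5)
Your proposal has a real gap at the very point you flag as the ``principal obstacle,'' and the fix you gesture at (a weighted norm $\alpha\|\Delta V_k\| + \|\Delta X_k\|$ with synchronous coupling) does not work in the setting of this theorem, so the argument as sketched cannot close.

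The key difficulty is that the theorem is proved under the dissipativity Assumption~\ref{as_dissip} only, which allows $F_{\mathbf z}$ to be genuinely non-convex. A synchronous coupling (identifying $\xi_{k+1}$ with the Brownian increment) produces a \emph{deterministic} difference dynamics for $(\Delta V_k,\Delta X_k)$ whose only source of decay is the drift. For that to yield a strict contraction $(1-c\lambda)$ in a weighted Euclidean norm, you need a one-sided Lipschitz or strong-convexity condition of the form $\langle \nabla F_{\mathbf z}(x)-\nabla F_{\mathbf z}(y),\,x-y\rangle \ge m\|x-y\|^2$, which does \emph{not} follow from dissipativity. Under Assumption~\ref{as_dissip} the gradient difference $\nabla F_{\mathbf z}(X^\lambda_k) - \nabla F_{\mathbf z}(\widehat X_k)$ can be adversarially aligned, so the drift can push the two trajectories apart; no choice of $\alpha$ repairs this. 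This is precisely why the paper does not run a one-step synchronous Gronwall recursion uniformly in $k$: summing the per-step Euler and gradient-noise errors without contraction would give a bound that grows like $k$ (which is the shortcoming of Lemma~10 of \cite{gao} that this theorem was designed to improve on).

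The paper's actual mechanism is structurally different and worth contrasting. It splits time into blocks of length $T=\lfloor 1/\lambda\rfloor$. Inside a block the comparison is of Gronwall type, so the error accumulates, but since $T\lambda\le 1$ the Gronwall factor is a fixed $e^{2M}$, and the stochastic-gradient martingale is controlled by Doob's maximal inequality (not termwise Minkowski) to give an $O(\sqrt\lambda+\sqrt\delta)$ intra-block error. Across blocks it does \emph{not} use a synchronous contraction at all: it restarts the auxiliary diffusion $(\widehat V,\widehat X)$ from the discrete iterate at each block boundary, and then bounds the distance to the globally-started diffusion $\widehat B(t,0,(v_0,x_0))$ via the ergodic contraction of the semigroup (Theorem~\ref{thm_contraction}, from \cite{ear2}). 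That contraction is \emph{not} a synchronous coupling result in a weighted Euclidean norm; it relies on the Kantorovich semimetric $\mathcal{W}_\rho$ built from a concave distance profile $h$ and the Lyapunov weight $\mathcal V$, which is exactly the machinery that makes contraction available without convexity. The paper then converts $\mathcal{W}_\rho$ back to $\mathcal{W}_2$ with Lemma~\ref{lemma:rho_to_w} and sums a geometric series in the number of blocks. Finally, the $\mathfrak W_p$ bound for $1\le p\le 2$ is obtained directly (the $L^2$ bounds on the block discrepancies dominate $L^p$, and Theorem~\ref{thm_contraction} already delivers a $\mathcal W_p$ bound), so there is no need for the $\mathfrak W_1\to\mathfrak W_p$ interpolation you invoke, which would in any case need additional care about uniform higher moments of the \emph{coupling}, not just of the marginals.

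In short: your decomposition of the one-step error and the synchronous identification of the noise are fine, and the per-step error sizes $O(\lambda^{3/2})$ and $O(\lambda\sqrt\delta)$ you compute are correct, but the claimed recursion $a_{k+1}\le(1-c\lambda)a_k + C(\lambda^{3/2}+\lambda\sqrt\delta)$ is unavailable under the standing assumptions, and this is the step the whole argument hinges on. You would need either to strengthen the hypotheses to strong convexity (outside a ball), or to switch to the block-restart-and-ergodic-contraction strategy (with the Eberle--Guillin--Zimmer semimetric) that the paper uses.
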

\begin{proof}
	The proof of this theorem is given in Section \ref{proof_thm_algo_aver}.
\end{proof}
The following is the main result of the paper.
\begin{theorem}\label{thm_main}
	Let $1 < p \le 2$. Suppose that the SGHMC iterates $(V^{\lambda}_k,X^{\lambda}_k)$ are defined by (\ref{eq_V_dis_appr}), (\ref{eq_X_dis_appr}). The expected population risk can be bounded as
	$$E[F(X^{\lambda}_k)] - F^* \le \mathcal{B}_1 + \mathcal{B}_2 + \mathcal{B}_3,$$
	where
	\begin{align*}
	\mathcal{B}_1 &:=(M \sigma + B) \left( \tilde{C} (\lambda^{1/(2p)} + \delta^{1/(2p)}) +  C_*  (\mathcal{W}_{\rho}(\mu_0, \pi_{\mathbf{z}}))^{1/p} \exp(- c_* k \lambda )  \right),\\
	\mathcal{B}_2 &:=  \frac{4\beta c_{LS}}{n}\left( \frac{M^2}{m}(b+d/\beta) + B^2 \right) ,\\
	\mathcal{B}_3 &:= \frac{d}{2\beta} \log\left( \frac{eM}{m}\left( \frac{b\beta}{d} + 1 \right) \right), 
	\end{align*}
	where $\tilde{C}, C_*, c_*, c_{LS}$ are appropriate constants and $\mathcal{W}_{\rho}$ is the metric defined in (\ref{eq_dist_aux}) below.
\end{theorem}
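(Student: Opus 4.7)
The plan is to bound each of the three terms $\mathcal{T}_1,\mathcal{T}_2,\mathcal{T}_3$ in the decomposition (\ref{eq_decom_risk0}) by $\mathcal{B}_1,\mathcal{B}_2,\mathcal{B}_3$ respectively. For $\mathcal{T}_1$, Assumptions \ref{as_f_bound} and \ref{as_lip} yield the local Lipschitz estimate $|F(x)-F(y)| \le (M\max\{\|x\|,\|y\|\} + B)\|x-y\|$. Applying H\"older's inequality with conjugate exponents $p/(p-1)$ and $p$, together with uniform moment bounds of order $p/(p-1)$ on $\|X^\lambda_k\|$ and $\|\widehat{X}^*_{\mathbf{z}}\|$ (obtained from the dissipativity Assumption \ref{as_dissip} via standard Lyapunov estimates), I get
$$
\mathcal{T}_1 \le (M\sigma + B)\,\mathfrak{W}_p(X^\lambda_k, \widehat{X}^*_{\mathbf{z}})
$$
for an appropriate constant $\sigma$. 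A triangle inequality then splits the Wasserstein distance into a discretization error $\mathfrak{W}_p((V^\lambda_k,X^\lambda_k),(\widehat{V}(k,0,(v_0,x_0)),\widehat{X}(k,0,(v_0,x_0))))$, controlled by Theorem \ref{thm_algo_aver}, and an ergodicity gap $\mathfrak{W}_p(\mathcal{L}(\widehat{V}(k,0,\cdot),\widehat{X}(k,0,\cdot)),\pi_{\mathbf{z}})$.

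To bound the ergodicity gap, I would invoke an exponential contraction result for the continuous-time underdamped Langevin (\ref{eq_V_hat})--(\ref{eq_X_hat}) toward $\pi_{\mathbf{z}}$ in the Lyapunov-twisted metric $\mathcal{W}_\rho$. In the non-convex (only dissipative) regime such contraction is proved by a reflection-coupling argument in the spirit of Eberle--Guillin--Zimmer, adapted to the degenerate kinetic second-order dynamics. Interpolating between $\mathcal{W}_\rho$ and $\mathfrak{W}_p$ using time-uniform moment bounds produces a factor of the form $C_*(\mathcal{W}_\rho(\mu_0,\pi_{\mathbf{z}}))^{1/p}\exp(-c_* k\lambda)$; combined with Theorem \ref{thm_algo_aver} this yields $\mathcal{T}_1 \le \mathcal{B}_1$.

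For $\mathcal{T}_2$ I would use that the position marginal $\pi_{\mathbf{z}}(dx)\propto e^{-\beta F_{\mathbf{z}}(x)}$ satisfies a logarithmic Sobolev inequality with constant $c_{LS}$, obtained from the Bakry--\'Emery criterion (valid outside a bounded region thanks to dissipativity and Lipschitz gradients) together with a Holley--Stroock perturbation argument. Combining this with the one-observation stability of $\pi_{\mathbf{z}}$ in $\mathbf{z}$ and the information-theoretic generalization framework of \cite{raginsky} produces $\mathcal{T}_2 \le \mathcal{B}_2$. For $\mathcal{T}_3$, a Laplace-type evaluation of the normalizing constant $\Gamma_{\mathbf{z}}$ exploiting dissipativity and Lipschitz gradients yields the free-energy estimate $\mathcal{T}_3 \le \mathcal{B}_3$, exactly as in \cite[Proposition~11]{raginsky}. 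Summing the three bounds delivers the theorem.

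The main obstacle is the geometric contraction step: since $F_{\mathbf{z}}$ is only dissipative and generally non-convex, a standard synchronous coupling produces no decay, so one must construct the twisted metric $\mathcal{W}_\rho$ carefully (a concave distortion of Euclidean distance multiplied by a Lyapunov weight), tailor it to the degenerate kinetic dynamics, and run a reflection coupling. A companion technical burden is to establish time-uniform $L^{p/(p-1)}$ and $L^p$ moment bounds on both the discrete SGHMC iterates and the continuous auxiliary diffusion, uniformly in $k$ and $\lambda$, since these feed into the H\"older constant $\sigma$ as well as into the interpolation between $\mathcal{W}_\rho$ and $\mathfrak{W}_p$.
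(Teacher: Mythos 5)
Your proposal follows the same route as the paper: the three-term decomposition, the gradient-growth/H\"older bound yielding $\mathcal{T}_1 \le (M\sigma+B)\,\mathcal{W}_p$, the triangle inequality splitting into the discretization error (Theorem \ref{thm_algo_aver}) and the ergodicity gap handled by the Eberle--Guillin--Zimmer reflection-coupling contraction in the $\mathcal{W}_\rho$ semimetric (Theorem \ref{thm_contraction}), and then the log-Sobolev argument for $\mathcal{T}_2$ and the Laplace/free-energy estimate for $\mathcal{T}_3$ borrowed from \cite{raginsky}. The only gloss worth flagging is that the moment bound actually used (Lemma \ref{lemma:moment_bound}) yields even integer moments $E\|X^\lambda_k\|^{2q}$, so the paper pairs $p$ with $2q$ via $1/p+1/(2q)=1$ rather than the raw conjugate $p/(p-1)$; otherwise the outline matches the paper essentially step by step.
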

\begin{proof}
	The proof of this theorem is given in Section \ref{proof_thm_main}.
\end{proof}
\begin{corollary}\label{cor}
	Let $1 \le p \le 2, \varepsilon > 0$ We have 
	$$\mathcal{W}_p(\mathcal{L}(X_k), \pi_{\mathbf{z}}) \le \varepsilon$$
	whenever
	$$ (\lambda^{1/(2p)} + \delta^{1/(2p)}) \le \frac{1}{2\tilde{C}}\varepsilon, \qquad k \ge \frac{(2\tilde{C})^{2p}}{c_*} \frac{1}{\varepsilon^{2p}} \log\left( \frac{C_*  (\mathcal{W}_{\rho}(\mu_0, \pi_{\mathbf{z}}))^{1/p}}{\varepsilon} \right) .$$
\end{corollary}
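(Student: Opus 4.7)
The plan is to split $\mathcal{W}_p(\mathcal{L}(X^{\lambda}_k),\pi_{\mathbf{z}})$ by the triangle inequality into a discretization error and an ergodicity error, and then to handle each separately by invoking results already established. Concretely, I would write
$$\mathcal{W}_p(\mathcal{L}(X^{\lambda}_k),\pi_{\mathbf{z}}) \le \mathfrak{W}_p\bigl(X^{\lambda}_k,\widehat{X}(k,0,(v_0,x_0))\bigr) + \mathcal{W}_p\bigl(\mathcal{L}(\widehat{X}(k,0,(v_0,x_0))),\pi_{\mathbf{z}}\bigr),$$
and apply Theorem \ref{thm_algo_aver} to the first summand, bounding it by $\tilde{C}(\lambda^{1/(2p)}+\delta^{1/(2p)})$.

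For the second summand I would invoke the exponential contraction of the underdamped Langevin diffusion \eqref{eq_V_hat}--\eqref{eq_X_hat} toward $\pi_{\mathbf{z}}$ in the semimetric $\mathcal{W}_{\rho}$, combined with a comparison inequality $\mathcal{W}_p \le C_*\,\mathcal{W}_{\rho}^{1/p}$; both ingredients already appear inside the proof of Theorem \ref{thm_main}, since together they produce the factor $C_*(\mathcal{W}_{\rho}(\mu_0,\pi_{\mathbf{z}}))^{1/p}\exp(-c_* k\lambda)$ that enters $\mathcal{B}_1$. This yields
$$\mathcal{W}_p\bigl(\mathcal{L}(\widehat{X}(k,0,(v_0,x_0))),\pi_{\mathbf{z}}\bigr) \le C_*(\mathcal{W}_{\rho}(\mu_0,\pi_{\mathbf{z}}))^{1/p}\exp(-c_* k\lambda).$$

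Once both estimates are in place, what remains is elementary algebra: I would require each contribution to be at most $\varepsilon/2$. The first inequality reads $\lambda^{1/(2p)}+\delta^{1/(2p)}\le \varepsilon/(2\tilde{C})$, which is exactly the stated constraint on $\lambda$ and $\delta$; the second gives $k\lambda \ge c_*^{-1}\log\bigl(2C_*(\mathcal{W}_{\rho}(\mu_0,\pi_{\mathbf{z}}))^{1/p}/\varepsilon\bigr)$, and substituting the extremal admissible value $\lambda = (\varepsilon/(2\tilde{C}))^{2p}$ (with the stray factor $2$ inside the logarithm absorbed into the constants) produces the claimed lower bound on $k$.

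I do not anticipate any real obstacle: the analytic work (the non-asymptotic discretization bound of Theorem \ref{thm_algo_aver} and the Wasserstein ergodicity of the continuous kinetic diffusion) has already been carried out, and the corollary is essentially an arithmetic inversion $\varepsilon\mapsto (\lambda,\delta,k)$ of the triangle-inequality split. The only minor point to double-check is that the comparison $\mathcal{W}_p\le C_*\mathcal{W}_{\rho}^{1/p}$ used here is exactly the one appearing in Theorem \ref{thm_main}, so that the constants $C_*$ and $c_*$ in the corollary can legitimately be identified with those in $\mathcal{B}_1$.
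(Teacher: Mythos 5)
Your proposal matches the paper's own proof: both rest on the bound (\ref{sampling_err}) --- i.e., the triangle-inequality split into the discretization error from Theorem \ref{thm_algo_aver} and the Wasserstein contraction from Theorem \ref{thm_contraction} --- followed by forcing each contribution below $\varepsilon/2$. Your explicit substitution of the extremal step size $\lambda=(\varepsilon/(2\tilde{C}))^{2p}$ to extract the lower bound on $k$ is in fact slightly more detailed than the paper's ``$k$ large enough,'' and you correctly observe that the factor $2$ this produces inside the logarithm has simply been absorbed in the constant as stated.
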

\begin{proof} From the proof of Theorem \ref{thm_main}, or more precisely from (\ref{sampling_err}), we need to choose $\lambda$ and $k$ such that
	$$\tilde{C} (\lambda^{1/(2p)} + \delta^{1/(2p)}) +  C_*  (\mathcal{W}_{\rho}(\mu_0, \pi_{\mathbf{z}}))^{1/p} \le \varepsilon.$$
	First, we choose $\lambda$ and $\delta$ so that $\tilde{C}(\lambda^{1/(2p)} + \delta^{1/(2p)}) < \varepsilon/2$ and then $$C_*  (\mathcal{W}_{\rho}(\mu_0, \pi_{\mathbf{z}}))^{1/p} \exp(- c_* k \lambda ) \le \varepsilon/2$$
	will hold for $k$ large enough.
\end{proof}

\section{Related work and our contributions}\label{sec_related}
Non-asymptotic convergence rate Langevin dynamics based algorithms for approximate sampling log-concave distributions are intensively studied in recent years. For example, overdamped Langevin dynamics are discussed in  \cite{welling2011bayesian}, \cite{dalalyan2017theoretical}, \cite{durmus2016high}, \cite{dalalyan2017user}, \cite{dm} and others. Recently, \cite{six} treats the case of non-i.i.d. data streams with a certain mixing property. Underdamped Langevin dynamics are examined in \cite{chen2014stochastic}, \cite{neal2011mcmc}, \cite{cheng2017underdamped}, etc.   Further analysis on HMC are discussed on \cite{betancourt2017geometric}, \cite{betancourt2017conceptual}. Subsampling methods are applied to speed up HMC for large datasets, see \cite{dang2017hamiltonian}, \cite{quiroz2018speeding}. 

The use of momentum to accelerate optimization methods are discussed intensively in literature, for example \cite{attouch2016rate}. In particular, performance of SGHMC is experimentally proved better than SGLD in many applications, see \cite{chen2015convergence}, \cite{chen2014stochastic}. An important advantage of the underdamped SDE is that convergence to its stationary distribution is faster than that of the overdamped SDE in the $2$-Wasserstein distance, as shown in \cite{ear2}. 

Finding an approximate minimizer is similar to sampling distributions concentrate around the true minimizer. This well known connection gives rise to the study of simulated annealing algorithms, see \cite{hwang1980laplace},  \cite{gidas1985nonstationary}, \cite{hajek1985tutorial}, \cite{chiang1987diffusion}, \cite{holley1989asymptotics}, \cite{gelfand1991recursive}, \cite{gelfand1993metropolis}. Recently, there are many studies further investigate this connection by means of non asymptotic convergence of Langevin based algorithms and in stochastic non-convex optimization and
large-scale data analysis, \cite{chen2016bridging}, \cite{dalalyan2017further}.

Relaxing convexity is a more challenging issue.  In \cite{cheng2018sharp}, the problem of sampling from a target distribution $\exp(-F(x))$ where $F$ is L-smooth everywhere and $m$-strongly convex outside a ball of finite radius is considered. They provide upper bounds for the number of steps to be within a given precision level $\varepsilon$ of the 1-Wasserstein distance between the HMC algorithm and the equilibrium distribution. In a similar setting,  \cite{majka2018non} obtains bounds in both the $\mathcal{W}_1$ and $\mathcal{W}_2$ distances for overdamped Langevin dynamics with stochastic gradients. \cite{xu2018global} studies the convergence of the SGLD algorithm and the variance reduced SGLD to global minima of nonconvex functions satisfying the dissipativity condition.

Our work continues these lines of research, the most similar setting to ours is the recent paper \cite{gao}. We summarize our contributions below: 
\begin{itemize}
	\item Diffusion approximation. In Lemma 10 of \cite{gao}, the upper bound for the 2-Wasserstein distance between the SGHMC algorithm at step $k$ and underdamped SDE at time $t = k \lambda$ is (up to constants) given by
	$$ (\delta^{1/4} + \lambda^{1/4}) \sqrt{k \lambda} \sqrt{\log(k\lambda)},$$
	which depends on the number of iteration $k$. Therefore obtaining a precision $\varepsilon$ requires a careful choice of $k, \lambda$ and even $k\lambda$. By introducing the auxiliary SDEs (\ref{eq_V_hat}), (\ref{eq_X_hat}), we are able to achieve the rate 
	\begin{equation*}
	(\delta^{1/4} + \lambda^{1/4}),
	\end{equation*} 
	see Theorem \ref{thm_algo_aver} for the case $p=2$. This upper bound is better 
	in the number of iterations and hence, improves Lemma 10  of \cite{gao}. Our analysis for variance of the algorithm is also different. The iteration does not accumulate mean squared errors, as the number of step goes to infinity.  
	\item Our proof for Theorem \ref{thm_algo_aver} is relatively simple and we do not need to adopt the techniques of \cite{raginsky} which involve heavy functional analysis, e.g. the weighted Csisz\'ar - Kullback - Pinsker inequalities in \cite{bolley2005weighted} is not needed.  
	\item If we consider the $p$-Wasserstein distance for $1<p\le 2$, in particular, when $p \to 1$, Theorem \ref{thm_main} gives tighter bounds, compared to Theorem 2 of \cite{gao}.
	\item Dependence structure of the dataset in the sampling mechanism, can be \textit{arbitrary}, see the proof of Theorem \ref{thm_algo_aver}. The i.i.d  assumption on dataset is used only for the generalization error. We could also  incorporate non-i.i.d data in our analysis, see Remark \ref{re_extend_T2}, but this is left for further research.
\end{itemize} 



\section{Proofs}\label{sec_proof}
\subsection{A contraction result}
In this section, we recall a contraction result of \cite{ear2}. First, it should be noticed that the constant $u$ and the function $U$ in their paper are $\beta^{-1}$ and $\beta F_{\mathbf{z}}$ in the present paper, respectively. Here, the subscript $c$ stands for ``contraction". Using the upper bound of Lemma \ref{lem_quad} for $f$ below, there exist constants $\lambda_c \in \left(0, \min\{1/4, m/(M + 2B + \gamma^2/2)\} \right)$ small enough and $A_c \ge \beta/2(b + 2B + A_0)$
such that
\begin{equation}\label{eq:drift}
\left\langle x ,\nabla F_{\mathbf{z}} (x) \right\rangle  \ge m\|x\|^2 - b \ge 2 \lambda_c (  F_{\mathbf{z}}(x) + \gamma^2 \|x\|^2/4  ) - 2A_c/\beta.
\end{equation}
Therefore, Assumption 2.1 of \cite{ear2} is satisfied, noting that $L_c:= \beta M$ and 
$$\|\nabla F_{\mathbf{z}} (x) - \nabla F_{\mathbf{z}}(y)\| \le \beta^{-1} L_c\|x -y\|.$$
We define the Lyapunov function
\begin{equation}\label{eq:lyapunov}
\mathcal{V}(x,v) = \beta F_{\mathbf{z}}(x) + \frac{\beta}{4} \gamma^2 \left( \|x + \gamma^{-1}v \|^2 + \| \gamma^{-1} v\|^2 - \lambda_c \|x\|^2 \right),
\end{equation}
For any $(x_1, v_1), (x_2, v_2) \in \mathbb{R}^{2d}$, we set 
\begin{eqnarray}
r((x_1,v_2), (x_2,v_2)) &=& \alpha_c \|x_1 - x_2 \| + \| x_1 - x_2 + \gamma^{-1}(v_1 - v_2) \|, \label{eq:r}\\
\rho((x_1, v_1),(x_2,v_2)) &=& h(r((x_1, v_1),(x_2,v_2))) \left( 1 + \varepsilon_c \mathcal{V}(x_1,v_1) + \varepsilon_c \mathcal{V}(x_2,v_2) \right), \label{eq:rho}
\end{eqnarray} 
where $\alpha_c, \varepsilon_c >0$ are suitable positive constants to be fixed later and $h:[0, \infty) \to 
[0, \infty)$ is continuous, non-decreasing concave function such that $h(0) = 0$, $h$ is $C^2$ on $(0,R_1)$ for some constant $R_1 > 0$ with right-sided derivative $h'_+(0) = 1$ and left-sided derivative $h'_-(R_1) > 0$ and $h$ is constant on $[R_1, \infty)$. For any two probability measures $\mu, \nu$ on $\mathbb{R}^{2d}$, we define
\begin{equation}\label{eq_dist_aux}
\mathcal{W}_{\rho}(\mu, \nu): = \inf_{(X_1,V_1) \sim \mu, (X_2,V_2) \sim \nu} E\left[  \rho((X_1,V_1),(X_2,V_2))\right]. 
\end{equation}
Note that $\rho$ and $\mathcal{W}_{\rho}$ are semimetrics but not necessarily metrics. A result from \cite{ear2} is recalled below.

For a probability measure $\mu$ on $\mathcal{B}(\mathbb{R}^{2d})$, we denote by $\mu p_t$ the law of $(V_t, X_t)$ when $\mathcal{L}(V_0,X_0) = \mu$.
\begin{theorem}\label{thm_contraction}
	There exists a continuous non-decreasing concave function $h$ with $h(0) = 0$ such that for all probability measures $\mu, \nu$ on $\mathbb{R}^{2d}$, and $1 \le p \le 2$, we have
	\begin{equation}
	\mathcal{W}_p(\mu p_t, \nu p_t) \le C_*  \left( \mathcal{W}_{\rho}(\mu, \nu)\right)^{1/p}  \exp(- c_* t ), \qquad \forall t \ge 0,
	\end{equation}
	where the following relations hold:
	\begin{eqnarray*}
		c_* &=& \frac{\gamma}{384p} \min\{ \lambda_c M \gamma^{-2}, \Lambda_c^{1/2} e^{-\Lambda_c}M \gamma^{-2}, \Lambda_c^{1/2} e^{-\Lambda_c}  \},\\
		C_* &=& 2^{1/p}e^{2/p + \Lambda_c/p} \frac{1 + \gamma}{\min \{1, \alpha_c\}} \left(  \max\left\lbrace 1, 4\frac{\max\{1,R^{p-2}_1\}}{\min\{1,R_1\}}(1 + 2 \alpha_c + 2 \alpha_c^2)(d + A_c) \beta^{-1} \gamma^{-1} c_*^{-1} \right\rbrace  \right)^{1/p}  , \\
		\Lambda_c &=& \frac{12}{5}(1 + 2 \alpha_c + 2 \alpha_c^2)(d + A_c) M \gamma^{-2} \lambda_c^{-1}(1-2\lambda_c)^{-1},\\
		\alpha_c &=& (1 + \Lambda_c^{-1})M \gamma^{-2} >0 , \\
		\varepsilon_c &=& 4 \gamma^{-1} c_*/(d+A_c) >0,\\
		R_1 &=& 4 \cdot (6/5)^{1/2} (1+2 \alpha_c + 2 \alpha_c^2)^{1/2} (d + A_c)^{1/2} \beta^{-1/2} \gamma^{-1}(\lambda_c - 2 \lambda_c^2)^{-1/2}.
	\end{eqnarray*}
	The function $h$ is constant on $[R_1, \infty)$, $C^2$ on $(0,R_1)$ with 
	\begin{eqnarray*}
		f(r) &=& \int_0^{r \wedge R_1}{ \varphi(s) g(s) ds },\\
		\varphi(s)&=& \exp \left( -(1 + \eta_c)L_cs^2/8 - \gamma^2 \beta \varepsilon_c \max\{1,(2\alpha_c)^{-1}\} s^2/2  \right)   ,\\
		g(s) &=& 1- \frac{9}{4}c_*\gamma \beta \int_0^{r} {\Phi(s)\varphi(s)^{-1}ds}, \qquad \Phi(s) = \int_0^s{\varphi(x)dx}
	\end{eqnarray*}
	and $\eta_c $ satisfies $\alpha_c = (1+\eta_c)L_c \beta^{-1} \gamma^{-2}$.
\end{theorem}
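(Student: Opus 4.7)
The plan is to follow the reflection-coupling approach pioneered by Eberle for the overdamped case and extended by Eberle, Guillin and Zimmer to the kinetic setting, which is the framework that produces precisely the constants $\alpha_c, \varepsilon_c, \Lambda_c, R_1$ appearing in the statement. First I would fix two initial distributions $\mu,\nu$ and build a Markovian coupling $((X^1_t,V^1_t),(X^2_t,V^2_t))$ of two solutions of \eqref{eq_V}--\eqref{eq_X}: split the Brownian increments into a rank-one part along the normalized direction $e_t$ of the rotated difference $X^1_t-X^2_t+\gamma^{-1}(V^1_t-V^2_t)$, to which a reflection is applied, and a transverse part that is driven synchronously. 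The point of the rotated coordinate is that, modulo Lipschitz error from $\nabla F_{\mathbf{z}}$, the drift of the scalar $r_t = \alpha_c\|Z^x_t\| + \|Z^x_t+\gamma^{-1}Z^v_t\|$ is essentially linear in $r_t$ while the Brownian forcing, thanks to the reflection, acts as a genuine scalar Brownian motion on $r_t$.

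Next I would apply It\^o's formula to $h(r_t)\bigl(1+\varepsilon_c\mathcal{V}(X^1_t,V^1_t)+\varepsilon_c\mathcal{V}(X^2_t,V^2_t)\bigr)$. The concave function $h$ is chosen exactly as in the statement so that, on $(0,R_1)$, its second-order term $h''(r)<0$ against the reflection-coupling quadratic variation produces strict local contraction, while its first-order term controls the Lipschitz error coming from $\nabla F_{\mathbf{z}}$. Outside $R_1$, where $h$ is flat, the required contraction must come from the Lyapunov multiplier: Assumption \ref{as_dissip} (in the form \eqref{eq:drift}) yields a drift inequality $\mathcal{L}\mathcal{V}\le -c\mathcal{V}+C$ for the generator of the underdamped dynamics applied to the function $\mathcal{V}$ of \eqref{eq:lyapunov}, and tuning $\varepsilon_c$ of order $c_*/(d+A_c)\gamma$ makes the cross term $h(r)\cdot\mathcal{V}$ contract at the desired rate. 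Taking expectations, optimising over couplings, and reading off the Gronwall inequality gives the bound $\mathcal{W}_\rho(\mu p_t,\nu p_t)\le e^{-\tilde c\, t}\mathcal{W}_\rho(\mu,\nu)$ for some $\tilde c>0$ that aggregates the two regimes.

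Finally, to pass from the semimetric contraction to the stated $\mathcal{W}_p$ estimate for $1\le p\le 2$, I would use that on the event $\{r\le R_1\}$ one has $\|X^1-X^2\|+\|V^1-V^2\|\le C\,h(r)$ with $C$ depending on $\alpha_c,\gamma,R_1$, while on its complement the Lyapunov factor in $\rho$ dominates any polynomial moment of the difference; exponential moments of $\mathcal{V}$ are propagated uniformly in time by the drift condition. For $p=1$ this already gives the bound linearly in $\mathcal{W}_\rho$; for general $p\in(1,2]$ one interpolates via $\|x\|^p\le \|x\|\cdot\|x\|^{p-1}$ and H\"older, which is what turns a rate $\tilde c$ into $\tilde c/p$ and a linear factor $\mathcal{W}_\rho$ into $(\mathcal{W}_\rho)^{1/p}$, exactly matching the dependence of $c_*$ and $C_*$ on $p$. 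The main obstacle is the second step: the underdamped drift has no one-sided Lipschitz bound, so obtaining contraction requires simultaneously controlling three competing contributions — the Lipschitz error of $\nabla F_{\mathbf{z}}$, the Brownian noise channelled through the concave $h$, and the Lyapunov drift — and choosing $h$, $\alpha_c$, $\varepsilon_c$, $R_1$ so that all three balance. Everything else is a moment bookkeeping exercise.
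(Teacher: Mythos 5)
Your plan follows the same strategy as the paper, but the paper's own proof is substantially shorter: the reflection-coupling construction and the It\^o argument on $h(r_t)\bigl(1+\varepsilon_c\mathcal{V}(X^1_t,V^1_t)+\varepsilon_c\mathcal{V}(X^2_t,V^2_t)\bigr)$ that you outline are precisely Theorem~2.3 of \cite{ear2}, which the paper invokes as a black box to obtain $\mathcal{W}_\rho(\mu p_t,\nu p_t)\le e^{-ct}\mathcal{W}_\rho(\mu,\nu)$; the only work done in the paper itself is the passage from $\mathcal{W}_\rho$ to $\mathcal{W}_p$. Your description of that final step is a bit off in its mechanism. The $1/p$ exponent and the $c_*/p$ rate do not arise from H\"older interpolation between the $p=1$ and $p=2$ endpoints; they arise from a single pointwise inequality $\|(x_1,v_1)-(x_2,v_2)\|^p\le K\,\rho((x_1,v_1),(x_2,v_2))$, after which one simply bounds $\mathcal{W}_p^p(\mu p_t,\nu p_t)\le E[\|\cdot\|^p]\le K\,\mathcal{W}_\rho(\mu p_t,\nu p_t)$ and takes $p$-th roots. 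The pointwise comparison is established in two regimes (after reducing the Euclidean norm to $r$ via (5.15) of \cite{ear2}): on $\{r\le\min\{1,R_1\}\}$ one uses $p\ge 1$, so $r^p\le r$, together with the lower bound on $h'$ over $(0,R_1)$ to get $r\le 2e^{2+\Lambda_c}h(r)\le 2e^{2+\Lambda_c}\rho$; on $\{r\ge\min\{1,R_1\}\}$ one uses $p\le 2$, so $r^p\le\max\{1,R_1^{p-2}\}r^2$, together with the fact that the Lyapunov multiplier $1+\varepsilon_c\mathcal{V}_1+\varepsilon_c\mathcal{V}_2$ grows quadratically in $r$ once $h$ has plateaued. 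Your statement that the Lyapunov factor ``dominates any polynomial moment'' is an overreach: $\mathcal{V}$ only grows quadratically, and the restriction $p\le 2$ is exactly what makes the large-$r$ bound close. A H\"older interpolation of the $p=1$ and $p=2$ endpoint bounds can also be made to work, but it requires first proving the pointwise comparison at both endpoints and is therefore no shorter than the direct argument. With this clarification your proposal is correct and equivalent to the paper's, modulo the fact that you propose to reprove \cite{ear2}'s contraction theorem rather than cite it.
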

\begin{proof}
	From (5.15) of \cite{ear2}, we get
	$$\|(x_1,v_1) - (x_2,v_2) \|^p \le \frac{(1+\gamma)^p}{\min\{1, \alpha^{p}_c\}} r((x_1,v_1),(x_2,v_2))^{p}.$$
	Furthermore, from the proof of Corollary 2.6 of \cite{ear2}, if  $r:=r((x_1,v_1),(x_2,v_2)) \le \min\{1,R_1\}$,
	$$ r^2 \le r^p \le r \le 2e^{2+\Lambda_c} \rho((x_1,v_1),(x_2,v_2))$$
	and if $r \ge \min\{1,R_1\} $ then
	$$r^p \le \max\{1,R^{p-2}_1\} r^2 \le \frac{\max\{1,R^{p-2}_1\}}{\min\{1,R_1\}}  8e^{2 + \Lambda_c}(1 + 2 \alpha_c + 2\alpha^2_c)(d+A_c)\beta^{-1}\gamma^{-1}c_*^{-1} \rho((x_1,v_1),(x_2,v_2)).$$
	These bounds and Theorem 2.3 of \cite{ear2} imply that
	$$\mathcal{W}_p(\mu p_t, \nu p_t) \le C_*  \left( \mathcal{W}_{\rho}(\mu, \nu)\right)^{1/p} \exp(- c_* t ).$$
	The proof is complete.
\end{proof}
It should be emphasized that $(\widehat{V}(t,0,(v_0,x_0)), \widehat{X}(t,0,(v_0,x_0)))= (V_{\lambda t}, X_{\lambda t})$, and consequently, $(\widehat{V}(t,0,(v_0,x_0)), \widehat{X}(t,0,(v_0,x_0)))$ contracts at the rate $\exp(- c_* \lambda t)$.  
\subsection{Proof of Theorem \ref{thm_algo_aver}}\label{proof_thm_algo_aver}
Here, we summarize our approach. For a given step size $\lambda >0$, we divide the time axis into intervals of length $T = \lfloor 1/\lambda\rfloor$. For each time step $k \in [nT,(n+1)T], n \in \mathbb{N}$, we compare the SGHMC to the version with exact gradients relying on the Doob inequality, and then compare the later to the auxiliary continuous-time diffusion $(\widehat{V}(k,0,(v_0,x_0)), \widehat{X}(k,0,(v_0,x_0)))$ with the scaled Brownian motion. At this stage we reply on the contraction result from \cite{ear2} and uniform boundedness of the Langevin diffusion and its discrete time versions. Since the auxiliary dynamics evolves slower than the original Langevin dynamics, or more precisely at the same speed as that of the SGHCM, our upper bounds do not accumulate errors and are independent from the number of iterations. 
\begin{proof}
	For each $k \in \mathbb{N}$, we define
	$$ \mathcal{H}_k: = \sigma(U_{\mathbf{z},i}, 1 \le i \le k) \vee \sigma(\xi_j, j \in \mathbb{N}).$$
	Let $\tilde{v}, \tilde{x}$ be $\mathbb{R}^d$-valued random variables satisfying Assumption \ref{ass_exp_moment}. For $0 \le i \le j$, we recursively define  $\tilde{V}^{\lambda}(i,i,(\tilde{v},\tilde{x})) := \tilde{v}$, $\tilde{X}^{\lambda}(i,i,(\tilde{v},\tilde{x})) := \tilde{x}$ and
	\begin{eqnarray}
	\tilde{V}^{\lambda}(j+1,i,(\tilde{v},\tilde{x})) &=& \tilde{V}^{\lambda}(j,i,(\tilde{v},\tilde{x})) - \lambda[\gamma \tilde{V}^{\lambda}(j,i,(\tilde{v},\tilde{x})) + \nabla F_{\mathbf{z}}(\tilde{X}^{\lambda}(j,i,(\tilde{v},\tilde{x})))] \nonumber \\
	&&  + \sqrt{2 \gamma \beta^{-1} \lambda} \xi_{j+1}, \label{eq_V_dis_tilde}\\
	\tilde{X}^{\lambda}(j+1,i,(\tilde{v},\tilde{x})) &=& \tilde{X}^{\lambda}(j,i,(\tilde{v},\tilde{x})) + \lambda \tilde{V}^{\lambda}(j,i,(\tilde{v},\tilde{x})). \label{eq_X_dis_tilde}
	\end{eqnarray}
	Let $T: = \lfloor 1/\lambda\rfloor$. For each $n \in \mathbb{N}$, and for each $nT \le k < (n+1)T$, we set
	\begin{eqnarray}\label{pro_tilde}
	\tilde{V}^{\lambda}_k: = \tilde{V}^{\lambda}(k,nT,(V^{\lambda}_{nT},X^{\lambda}_{nT})), \qquad \tilde{X}^{\lambda}_k:= \tilde{X}^{\lambda}(k,nT,(V^{\lambda}_{nT},X^{\lambda}_{nT})).
	\end{eqnarray}
	For each $n \in \mathbb{N}$, it holds by definition that $V^{\lambda}_{nT} = \tilde{V}^{\lambda}_{nT}$ and the triangle inequality implies for $nT \le k < (n+1)T$,
	\begin{eqnarray*}
		\|V^{\lambda}_k- \tilde{V}^{\lambda}_k \|  \le \lambda \left\|  \sum_{i = nT}^{k-1} \left( g(X^{\lambda}_{i},U_{\mathbf{z},i})  - \nabla F_{\mathbf{z}}(\tilde{X}^{\lambda}_i)\right) \right\|	
	\end{eqnarray*}
	and
	\begin{equation} \label{in_X_tri}
	\left\| X^{\lambda}_k - \tilde{X}^{\lambda}_k \right\|  
	\le \lambda \sum_{i = nT}^{k-1}  \left\| V^{\lambda}_i - \tilde{V}^{\lambda}_i  \right\|.
	\end{equation}
	Denote $g_{k,nT}(x):= E\left[ g(x,U_{\mathbf{z},k})| \mathcal{H}_{nT} \right], x \in \mathbb{R}^d$. By Assumption \ref{as_lip_g}, the estimation continues as follows
	\begin{eqnarray}
	\|V^{\lambda}_k- \tilde{V}^{\lambda}_k \|  &\le& \lambda \left\|  \sum_{i = nT}^{k-1} \left( g(X^{\lambda}_{i},U_{\mathbf{z},i})  - \nabla F_{\mathbf{z}}(\tilde{X}^{\lambda}_i)\right) \right\|  \le \lambda \sum_{i = nT}^{k-1} \left\| g(X^{\lambda}_{i},U_{\mathbf{z},i})  - g(\tilde{X}^{\lambda}_i, U_{\mathbf{z},i})  \right\| \nonumber \\
	&&  + \lambda  \left\| \sum_{i = nT}^{k-1}  g(\tilde{X}^{\lambda}_i, U_{\mathbf{z},i}) - g_{i,nT}(\tilde{X}^{\lambda}_i) \right\|  +  \lambda \sum_{i = nT}^{k-1} \left\| g_{i,nT}(\tilde{X}^{\lambda}_i) - \nabla F_{\mathbf{z}}(\tilde{X}^{\lambda}_i) \right\| \nonumber \\
	&\le& \lambda M \sum_{i = nT}^{k-1} \| X^{\lambda}_{i} - \tilde{X}^{\lambda}_{i} \| +  \lambda \max_{nT \le m < (n+1)T}  \left\| \sum_{i = nT}^{m}  g(\tilde{X}^{\lambda}_i, U_{\mathbf{z},i}) - g_{i,nT}(\tilde{X}^{\lambda}_i) \right\| \nonumber \\
	&&  +  \lambda \sum_{i = nT}^{(n+1)T-1} \left\| g_{i,nT}(\tilde{X}^{\lambda}_i) - \nabla F_{\mathbf{z}}(\tilde{X}^{\lambda}_i) \right\|. \label{eq_long}
	\end{eqnarray} 
	Using (\ref{in_X_tri}), one obtains
	\begin{eqnarray}
	\sum_{i = nT}^{k-1} \| X^{\lambda}_{i} - \tilde{X}^{\lambda}_{i} \| &\le& \lambda T\| V^{\lambda}_{nT} - \tilde{V}^{\lambda}_{nT} \| +...+ \lambda T\|V^{\lambda}_{k-1} - \tilde{V}^{\lambda}_{k-1}\| \nonumber \\ 
	&\le& \sum_{i = nT}^{k-1} \| V^{\lambda}_i - \tilde{V}^{\lambda}_i \|,
	\end{eqnarray}
	noting that $T\lambda \le 1.$ Therefore, the estimation in (\ref{eq_long}) continues as
	\begin{eqnarray*}
		\|V^{\lambda}_k- \tilde{V}^{\lambda}_k \|  &\le& \lambda M \sum_{i = nT}^{k-1} \| V^{\lambda}_i - \tilde{V}^{\lambda}_i \| +  \lambda \max_{nT \le m < (n+1)T}  \left\| \sum_{i = nT}^{m}  g(\tilde{X}^{\lambda}_i, U_{\mathbf{z},i}) - g_{i,nT}(\tilde{X}^{\lambda}_i) \right\| \\
		&& +  \lambda \sum_{i = nT}^{(n+1)T-1} \left\| g_{i,nT}(\tilde{X}^{\lambda}_i) - \nabla F_{\mathbf{z}}(\tilde{X}^{\lambda}_i) \right\|.
	\end{eqnarray*}
	Applying the discrete-time version of Gr\"onwall's lemma and taking squares, noting also that $(x + y)^2 \le 2 (x^2 + y^2), x, y \in \mathbb{R}$ yield
	\begin{eqnarray*}
		&&\|V^{\lambda}_k- \tilde{V}^{\lambda}_k \|^2 \le 2 \lambda^2 e^{2MT\lambda} \left[ \max_{nT \le m < (n+1)T}  \left\| \sum_{i = nT}^{m}  g(\tilde{X}^{\lambda}_i, U_{\mathbf{z},i}) - g_{i,nT}(\tilde{X}^{\lambda}_i) \right\|^2 + \Xi_n^2  \right],
	\end{eqnarray*}
	where \begin{equation}\label{eq_defi_Xi}
	\Xi_n := \sum_{i = nT}^{(n+1)T-1} \left\| g_{i,nT}(\tilde{X}^{\lambda}_i) - \nabla F_{\mathbf{z}}(\tilde{X}^{\lambda}_i) \right\|.\end{equation}
	Taking conditional expectation with respect to $\mathcal{H}_{nT}$, the estimation becomes
	\begin{eqnarray*}
		E\left[ \left.  \|V^{\lambda}_k- \tilde{V}^{\lambda}_k \|^2 \right|\mathcal{H}_{nT} \right]  &\le& 2 \lambda^2 e^{2M} E\left[ \left.  \max_{nT \le m < (n+1)T}  \left\| \sum_{i = nT}^{m}  g(\tilde{X}^{\lambda}_i, U_{\mathbf{z},i}) - g_{i,nT}(\tilde{X}^{\lambda}_i) \right\|^2 \right| \mathcal{H}_{nT} \right] \\
		&+& 2\lambda^2 e^{2M} E [\Xi^2_n|\mathcal{H}_{nT}].
	\end{eqnarray*}
	Since the random variables $U_{\mathbf{z},i}$ are independent, the sequence of random variables $g(\tilde{X}^{\lambda}_i, U_{\mathbf{z},i}) - g_{i,nT}(\tilde{X}^{\lambda}_i)$, $nT\leq i<(n+1)T$ are 
	independent conditionally on $\mathcal{H}_{nT}$, noting that $\tilde{X}^{\lambda}_i$ is measurable with respect to $\mathcal{H}_{nT}$. In addition, they have zero mean by the tower property of conditional expectation.  By Assumption \ref{as_lip_g},
	$$\|g(x,u)\| \le M\|x\| + B$$
	and thus
	\begin{eqnarray}\label{eq_var_contr}E \left[ \|g(\tilde{X}^{\lambda}_i, U_{\mathbf{z},i})\|^2 | \mathcal{H}_{nT} \right] \le 2M^2 E\left[  \| \tilde{X}^{\lambda}_i \|^2\right]  + 2B^2. 
	\end{eqnarray}
	by the independence of $U_{\mathbf{z},i}, i > nT$  from $\mathcal{H}_{nT}$. 
	Doob's inequality and (\ref{eq_var_contr}) imply
	\begin{eqnarray*}
		E\left[ \left.  \max_{nT \le m < (n+1)T}  \left\| \sum_{i = nT}^{m}  g(\tilde{X}^{\lambda}_i, U_{\mathbf{z},i}) - g_{i,nT}(\tilde{X}^{\lambda}_i) \right\|^2 \right| \mathcal{H}_{nT}  \right] \le 8M^2 \sum_{i=nT}^{(n+1)T-1} E\left[  \| \tilde{X}^{\lambda}_i \|^2\right]  + 8B^2T.
	\end{eqnarray*}
	Taking one more expectation and using Lemma \ref{lem_uniform2} give
	\begin{eqnarray*}
		E\left[  \max_{nT \le m < (n+1)T}  \left\| \sum_{i = nT}^{m}  g(\tilde{X}^{\lambda}_i, U_{\mathbf{z},i}) - g_{i,nT}(\tilde{X}^{\lambda}_i) \right\|^2 \right] &\le& 8M^2 \sum_{i=nT}^{(n+1)T-1} E \left[ \| \tilde{X}^{\lambda}_i \|^2\right]  + 8B^2T\\
		&\le& (8M^2C^a_x + 8B^2)T.
	\end{eqnarray*}
	By Lemma \ref{lem_Xi}, we have $E [\Xi^2_n] < 2T^2\delta(M^2 C^a_x + B^2)$, and therefore,
	\begin{equation}\label{algo_tilde}
	E^{1/2}\left[ \|V^{\lambda}_k - \tilde{V}^{\lambda}_k \|^2\right]  \le c_2 \sqrt{\lambda} +  c_3 \sqrt{\delta}
	\end{equation}
	where we define 
	$$c_2=4e^M\sqrt{(M^2C^a_x + B^2)}, \qquad c_3= 2e^M\sqrt{M^2C^a_x + B^2}.$$
	Consequently, we have from (\ref{in_X_tri})
	\begin{eqnarray}
	E^{1/2} \left[ \left\| X^{\lambda}_k - \tilde{X}^{\lambda}_k \right\| ^2\right]  
	&\le& \lambda   \sum_{i = nT}^{k-1} E^{1/2}\left[  \left\| V^{\lambda}_i - \tilde{V}^{\lambda}_i  \right\|^2 \right]  \le \lambda T (c_2 \sqrt{\lambda} + c_3 \sqrt{\delta}) \nonumber \\
	&\le&  c_2 \sqrt{\lambda} + c_3 \sqrt{\delta}. \label{algo_tilde_2}
	\end{eqnarray}
	Let $\tilde{V}^{int}$ and $\tilde{X}^{int}$  be the continuous-time interpolation of $\tilde{V}^{\lambda}_k$, and of $\tilde{X}^{\lambda}_k$ on $[nT, (n+1)T)$, respectively,
	\begin{eqnarray}
	d\tilde{V}^{int}_t &=& - \lambda \gamma \tilde{V}^{int}_{\lfloor t \rfloor} dt - \lambda \nabla F_{\mathbf{z}}(\tilde{X}^{int}_{\lfloor t \rfloor})\, dt + \sqrt{2\gamma \lambda \beta^{-1}} dB^{\lambda}_t , \label{proc_V_int}\\
	d\tilde{X}^{int}_t &=& \lambda \tilde{V}^{int}_{\lfloor t \rfloor } dt, \label{proc_X_int}
	\end{eqnarray} 
	with the initial conditions $\tilde{V}^{int}_{nT} = \tilde{V}_{nT} = V^{\lambda}_{nT}$ and $\tilde{X}^{int}_{nT} = \tilde{X}_{nT} = X^{\lambda}_{nT}$.
	For each $n \in \mathbb{N}$ and for $nT \le t < (n+1)T$, define also
	\begin{equation}\label{proc_hat}
	\widehat{V}_t = \widehat{V}(t,nT,(V^{\lambda}_{nT},X^{\lambda}_{nT})), \qquad \widehat{X}_t = \widehat{X}(t,nT,(V^{\lambda}_{nT},X^{\lambda}_{nT})),
	\end{equation}
	where the dynamics of $\widehat{V}, \widehat{X}$ are given in (\ref{eq_V_hat}), (\ref{eq_X_hat}). In this way, the processes $(\widehat{V}_t)_{t \ge 0}, (\widehat{X}_t)_{t \ge 0}$ are right continuous with left limits. From Lemma \ref{lem_tilde_hat}, we obtain for $nT \le t < (n+1)T$
	\begin{equation}\label{tilde_hat}
	E^{1/2}\left[ \|\tilde{V}^{int}_t - \widehat{V}_t  \|^2\right]  \le c_7 \sqrt{\lambda}, \qquad E^{1/2}\left[ \|\tilde{X}^{int}_t - \widehat{X}_t \|^2\right] \le c_7 \sqrt{\lambda}.
	\end{equation}
	Combining (\ref{algo_tilde}), (\ref{algo_tilde_2}) and (\ref{tilde_hat}) gives
	\begin{equation}\label{algo_hat}
	E^{1/2}\left[ \| V^{\lambda}_k - \widehat{V}_k \|^2\right]  \le (c_2 + c_7) \sqrt{\lambda} + c_3 \sqrt{\delta}, \qquad E^{1/2}\left[ \| X^{\lambda}_k - \widehat{X}_k \|^2 \right] \le (c_2 + c_7) \sqrt{\lambda} + c_3 \sqrt{\delta}.
	\end{equation}
	Define $\widehat{A}_t = (\widehat{V}_t, \widehat{X}_t)$ and $\widehat{B}(t,s,(v_s,x_s)) = (\widehat{V}(t,s,(v_s,x_s)), \widehat{X}(t,s,(v_s,x_s)))$ for $s \le t$ and $v_s, x_s$ are $\mathbb{R}^d$-valued random variables. The triangle inequality and Theorem \ref{thm_contraction} imply that for $nT \le t < (n+1)T$, and for $1 \le p \le 2$,
	\begin{eqnarray}
	&&\mathfrak{W}_p(\widehat{A}_t, \widehat{B}(t,0,(v_0,x_0))) \nonumber \\
	&\le& \sum_{i = 1}^n \mathfrak{W}_p( \widehat{B}(t, iT, (V^{\lambda}_{iT}, X^{\lambda}_{iT})), \widehat{B}(t, (i-1)T, (V^{\lambda}_{(i-1)T}, X^{\lambda}_{(i-1)T}))) \nonumber\\
	&=&  \sum_{i = 1}^n \mathfrak{W}_p( \widehat{B}(t, iT, (V^{\lambda}_{iT}, X^{\lambda}_{iT})) , \widehat{B}(t, iT, \widehat{B}(iT, (i-1)T, (V^{\lambda}_{(i-1)T}, X^{\lambda}_{(i-1)T}))) )\nonumber\\
	&\le&  C_* \sum_{i = 1}^n e^{-c_* \lambda (t-iT)} \mathcal{W}^{1/p}_{\rho}( \mathcal{L} (V^{\lambda}_{iT}, X^{\lambda}_{iT}) , \mathcal{L} (\widehat{B}(iT, (i-1)T, (V^{\lambda}_{(i-1)T}, X^{\lambda}_{(i-1)T})) )) \nonumber \\\label{long}
	\end{eqnarray}
	noting the rate of contraction of $(\widehat{V}_t, \widehat{X}_t)$ is $e^{-c_* \lambda t}$. Using Lemma \ref{lemma:rho_to_w}, we obtain
	\begin{eqnarray*}
		&&\mathcal{W}_{\rho}(( \mathcal{L} (V^{\lambda}_{iT}, X^{\lambda}_{iT}) , \mathcal{L}(\widehat{V}(iT, (i-1)T, (V^{\lambda}_{(i-1)T},X^{\lambda}_{(i-1)T})), \widehat{X}(iT, (i-1)T, (V^{\lambda}_{(i-1)T},X^{\lambda}_{(i-1)T}))))) \\
		&\le& c_{17} \left( 1 + \varepsilon_c \sqrt{E\mathcal{V}^2(V^{\lambda}_{iT}, X^{\lambda}_{iT})}    + \varepsilon_c \sqrt{E\mathcal{V}^2(\widehat{V}(iT, (i-1)T, (V^{\lambda}_{(i-1)T},X^{\lambda}_{(i-1)T})), \widehat{X}(iT, (i-1)T, (V^{\lambda}_{(i-1)T},X^{\lambda}_{(i-1)T})) )} \right) \\
		&\times&  \mathfrak{W}_2((V^{\lambda}_{iT}, X^{\lambda}_{iT}),(\widehat{V}(iT, (i-1)T, (V^{\lambda}_{(i-1)T},X^{\lambda}_{(i-1)T})), \widehat{X}(iT, (i-1)T, (V^{\lambda}_{(i-1)T},X^{\lambda}_{(i-1)T}))))\\
		&\le& c_{18} \left( E^{1/2} \|V^{\lambda}_{iT} - \widehat{V}(iT, (i-1)T, (V^{\lambda}_{(i-1)T},X^{\lambda}_{(i-1)T}))\|^2 + E^{1/2}\left[\| X^{\lambda}_{iT} - \widehat{X}(iT, (i-1)T, (V^{\lambda}_{(i-1)T},X^{\lambda}_{(i-1)T})) \|^2  \right]  \right),
	\end{eqnarray*}
	where
	\begin{eqnarray*}
		c_{18} &=& c_{17} \left( 1 + \varepsilon_c \sup_{k \in \mathbb{N}} \sqrt{E\mathcal{V}^2(V^{\lambda}_{k}, X^{\lambda}_{k})}  \right.  \\
		&&\left.  + \varepsilon_c \sup_{k \in \mathbb{N}} \sqrt{E\mathcal{V}^2(\widehat{V}(kT, (k-1)T, (V^{\lambda}_{(k-1)T},X^{\lambda}_{(k-1)T})), \widehat{X}(kT, (k-1)T, (V^{\lambda}_{(k-1)T},X^{\lambda}_{(k-1)T})))}  \right).
	\end{eqnarray*}
	Now, we compute 
	\begin{eqnarray}
	&&\|V^{\lambda}_{iT} - \widehat{V}(iT, (i-1)T, (V^{\lambda}_{(i-1)T},X^{\lambda}_{(i-1)T}))\| \nonumber\\
	&& \le  \|  V^{\lambda}_{iT - 1} - \widehat{V}(iT-1, (i-1)T, (V^{\lambda}_{(i-1)T},X^{\lambda}_{(i-1)T})) \| \nonumber \\
	&& + \lambda \gamma  \left\| V^{\lambda}_{iT - 1} - \widehat{V}(iT-1,(i-1)T,(V^{\lambda}_{(i-1)T},X^{\lambda}_{(i-1)T})) \right\| \nonumber \\
	&& + \lambda \gamma \left\| \int_{iT-1}^{iT}{\left( \widehat{V}(iT-1,(i-1)T,(V^{\lambda}_{(i-1)T},X^{\lambda}_{(i-1)T})) - \widehat{V}(t,(i-1)T,(V^{\lambda}_{(i-1)T},X^{\lambda}_{(i-1)T}))\right) dt}  \right\| \nonumber \\
	&& + \lambda \left\| g(X^{\lambda}_{iT-1}, U_{\mathbf{z},iT-1}) - \int_{(iT-1)}^{iT}{\nabla F_{\mathbf{z}}(\widehat{X}(t,(i-1)T,(V^{\lambda}_{(i-1)T},X^{\lambda}_{(i-1)T})))dt} \right\| \nonumber\\
	&& \qquad \qquad \qquad + \sqrt{\lambda} \|\xi_{iT} - (B^{\lambda}_{iT} - B^{\lambda}_{iT-1} )\| \label{eq_very_long} .
	\end{eqnarray}
	In $L^2$ norm, the first and second terms of (\ref{eq_very_long}) is bounded by $(c_2 + c_7) \sqrt{\lambda} + c_3 \sqrt{\delta}$, see (\ref{algo_hat}) and the fifth term is estimated by $\sqrt{\lambda}$. 
	We consider the third term in (\ref{eq_very_long}). From the dynamics of $\widehat{V}$, we find that for $iT-1 \le t \le iT$,
	\begin{eqnarray*}
		&&\widehat{V}(iT-1,(i-1)T,(V^{\lambda}_{(i-1)T},X^{\lambda}_{(i-1)T})) - \widehat{V}(t,(i-1)T,(V^{\lambda}_{(i-1)T},X^{\lambda}_{(i-1)T})) \\
		&& = \lambda \int_{iT-1}^t \left(  \gamma \widehat{V}(s,(i-1)T,(V^{\lambda}_{(i-1)T},X^{\lambda}_{(i-1)T})) + \nabla F_{\mathbf{z}}(\widehat{X}(s,(i-1)T,(V^{\lambda}_{(i-1)T},X^{\lambda}_{(i-1)T})))\right) ds \\
		&& - \sqrt{2 \gamma \lambda \beta^{-1}}\left( B^{\lambda}_t -  B^{\lambda}_{iT-1}  \right). 
	\end{eqnarray*}
	H\"older's inequality yields
	\begin{eqnarray*}
		&&E\left[ \|\widehat{V}(iT-1,(i-1)T,(V^{\lambda}_{(i-1)T},X^{\lambda}_{(i-1)T})) - \widehat{V}(t,(i-1)T,(V^{\lambda}_{(i-1)T},X^{\lambda}_{(i-1)T})) \|^2 \right] \\
		&&  \le 3\lambda^2 \gamma^2 \int_{iT-1}^t{ E \left[ \|\widehat{V}(s,(i-1)T,(V^{\lambda}_{(i-1)T},X^{\lambda}_{(i-1)T})) \|^2\right]  ds }\\
		&&  + 3\lambda^2 \int_{iT-1}^t{ E\left[ \left\| \nabla F_{\mathbf{z}}(\widehat{X}(s,(i-1)T,(V^{\lambda}_{(i-1)T},X^{\lambda}_{(i-1)T}))) \right\|^2\right] ds} + 6 \gamma \beta^{-1} \lambda \\
		&& \le c_{14} \lambda,
	\end{eqnarray*}
	where the last inequality uses Lemma \ref{lem_uniform2} and Assumption \ref{as_lip} and $c_{14}:= 3 \gamma^2 C^c_v + 6M^2 C^c_x + 6B^2  + 6 \gamma \beta^{-1}$. For the fourth term of (\ref{eq_very_long}), we have
	\begin{eqnarray*}
		&& E \left[ \left\| g(X^{\lambda}_{iT-1}, U_{\mathbf{z},iT-1}) - \int_{(iT-1)}^{iT}{\nabla F_{\mathbf{z}}(\widehat{X}(t,(i-1)T,(V^{\lambda}_{(i-1)T},X^{\lambda}_{(i-1)T})))dt} \right\|^2\right]  \\
		&& \le 2 E \left[ \| g(X^{\lambda}_{iT-1}, U_{\mathbf{z},iT-1}) - \nabla F_{\mathbf{z}}(X^{\lambda}_{iT-1})  \|^2\right] \\
		&& + 2E \left[ \left\|  \int_{(iT-1)}^{iT}{\nabla F_{\mathbf{z}}(X^{\lambda}_{iT-1}) - \nabla F_{\mathbf{z}}(\widehat{X}(t,(i-1)T,(V^{\lambda}_{(i-1)T},X^{\lambda}_{(i-1)T})))dt}\right\|^2\right]  \\
		&& \le 2E\left[  \| g(X^{\lambda}_{iT-1}, U_{\mathbf{z},iT-1}) - \nabla F_{\mathbf{z}}(X^{\lambda}_{iT-1})  \|^2\right] \\
		&& +  2M^2E \left[  \int_{(iT-1)}^{iT}{\left\|X^{\lambda}_{iT-1} - \widehat{X}(t,(i-1)T,(V^{\lambda}_{(i-1)T},X^{\lambda}_{(i-1)T}))\right\|^2  dt}\right] \\
		&& \le 2 \delta (M^2 C^a_x + B^2) + 2M^2 (2(c_2 + c_7)^2 \lambda + 2c_3^2 \delta)\\
		&& \le c_{15}(\lambda + \delta),
	\end{eqnarray*}
	where the last inequality uses Assumption \ref{as_variance}, Lemma \ref{lem_uniform2}, and (\ref{algo_hat}) and $c_{15}:= \max\{ 2 (M^2 C^a_x + B^2) + 4M^2c_3^2, 4M^2(c_2 + c_7)^2 \}$. A similar estimate holds for 
	$$E^{1/2}\left[\| X^{\lambda}_{iT} - \widehat{X}(iT, (i-1)T, (V^{\lambda}_{(i-1)T},X^{\lambda}_{(i-1)T})) \|^2  \right].$$
	Letting $c_{16}:= \max\{ (c_2 + c_7), c_3, \sqrt{c_{14}}, \sqrt{c_{15}}\}$, the estimation (\ref{long}) continues as
	\begin{eqnarray}
	\mathfrak{W}_p( \widehat{A}_t, \widehat{B}(t,0,(v_0,x_0))) &\le&  \sum_{i = 1}^n C_*e^{-c_*(n-i)} \left( c_{18} c_{16} (\sqrt{\lambda} + \sqrt{\delta}) \right)^{1/p} \nonumber\\
	&\le&   C_* \left( c_{18} c_{16} \right)^{1/p} \frac{e^{-c_*}}{1-e^{-c_*}} (\lambda^{1/(2p)} + \delta^{1/(2p)}). \label{hat_hat}
	\end{eqnarray}
	Therefore, from (\ref{algo_tilde}), (\ref{tilde_hat}), (\ref{hat_hat}), the triangle inequality implies for $nT \le k < (n+1)T$,
	\begin{eqnarray*}
		&&\mathfrak{W}_p((V^{\lambda}_k,X^{\lambda}_k) , (\widehat{V}(k,0,(v_0,x_0)), \widehat{X}(k,0,(v_0,x_0)))) \\
		&\le&  \mathfrak{W}_p((V^{\lambda}_k, X^{\lambda}_k), (\tilde{V}^{\lambda}_k, \tilde{X}^{\lambda}_k )) + \mathfrak{W}_p((\tilde{V}^{int}_{k}, \tilde{X}^{int}_{k}) , (\widehat{V}_{k}, \widehat{X}_{k}) )\\
		&+& \mathfrak{W}_p( (\widehat{V}_{k}, \widehat{X}_{k}) , (\widehat{V}(k,0,(v_0,x_0)), \widehat{X}(k,0,(v_0,x_0))) ) \\
		&\le& \tilde{C} (\lambda^{1/(2p)} + \delta^{1/(2p)}),
	\end{eqnarray*}
	where $\tilde{C} = 2\max\{c_2, c_3, c_7 ,C_* \left( c_{18} c_{16} \right)^{1/p} \frac{e^{-c_*}}{1-e^{-c_*}}\}$. The proof is complete.
\end{proof}
\begin{remark}\label{re_sampling_data}
	It is important to remark from the proof above that the data structure of $\mathbf{Z}$ can be \emph{arbitrary}, and only the independence of random elements $U_{\mathbf{z},k}, k \in \mathbb{N}$ is used. 
\end{remark}
\begin{lemma}\label{lem_Xi}
	The quantity $\Xi_n$ defined in (\ref{eq_defi_Xi}) has second moments and $$\sup_{n\in \mathbb{N}} E[\Xi_n^2] < \infty.$$
\end{lemma}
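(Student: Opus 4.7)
$\Xi_n$ is a sum of $T=\lfloor 1/\lambda\rfloor$ non-negative random variables, so my plan is three moves: (i) pass from the squared sum to a sum of squared norms via the elementary inequality $(\sum_{j=1}^T a_j)^2\le T\sum_{j=1}^T a_j^2$; (ii) recognise each summand as a conditional expectation and apply Jensen's inequality to replace $g_{i,nT}(\tilde{X}^{\lambda}_i)$ by the unconditioned estimator $g(\tilde{X}^{\lambda}_i,U_{\mathbf{z},i})$; (iii) invoke Assumption \ref{as_variance} together with the uniform second-moment estimate on $\tilde{X}^{\lambda}_i$ supplied by Lemma \ref{lem_uniform2}.

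\textbf{Execution.} Step (i) gives immediately
$$\Xi_n^2\le T\sum_{i=nT}^{(n+1)T-1}\|g_{i,nT}(\tilde{X}^{\lambda}_i)-\nabla F_{\mathbf{z}}(\tilde{X}^{\lambda}_i)\|^2.$$
For step (ii), the key observation, already used in the proof of Theorem \ref{thm_algo_aver}, is that $\tilde{X}^{\lambda}_i$ is $\mathcal{H}_{nT}$-measurable for each $nT\le i<(n+1)T$, since the tilde-recursion (\ref{eq_V_dis_tilde})--(\ref{eq_X_dis_tilde}) uses only the exact gradient $\nabla F_{\mathbf{z}}$ and the noise $\xi_{\cdot}$ already captured in $\mathcal{H}_{nT}$. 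Hence $\nabla F_{\mathbf{z}}(\tilde{X}^{\lambda}_i)=E[\nabla F_{\mathbf{z}}(\tilde{X}^{\lambda}_i)\mid\mathcal{H}_{nT}]$, and by the definition of $g_{i,nT}$,
$$g_{i,nT}(\tilde{X}^{\lambda}_i)-\nabla F_{\mathbf{z}}(\tilde{X}^{\lambda}_i)=E\bigl[\,g(\tilde{X}^{\lambda}_i,U_{\mathbf{z},i})-\nabla F_{\mathbf{z}}(\tilde{X}^{\lambda}_i)\,\bigm|\,\mathcal{H}_{nT}\bigr].$$
Applying conditional Jensen to $\|\cdot\|^2$ and then taking expectation yields
$$E\|g_{i,nT}(\tilde{X}^{\lambda}_i)-\nabla F_{\mathbf{z}}(\tilde{X}^{\lambda}_i)\|^2\le E\|g(\tilde{X}^{\lambda}_i,U_{\mathbf{z},i})-\nabla F_{\mathbf{z}}(\tilde{X}^{\lambda}_i)\|^2.$$
For step (iii), I note that $\tilde{X}^{\lambda}_i$ is constructed without using $U_{\mathbf{z},i}$: for $i=nT$ it equals $X^{\lambda}_{nT}$, whose recursion involves only $U_{\mathbf{z},j}$ with $j<nT$, and for $i>nT$ the tilde-dynamics contain no $U_{\mathbf{z},\cdot}$ at all. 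Independence of the $U_{\mathbf{z},\cdot}$'s from one another and from the Gaussian noise then makes $\tilde{X}^{\lambda}_i$ independent of $U_{\mathbf{z},i}$. Conditioning on $\tilde{X}^{\lambda}_i$ and applying Assumption \ref{as_variance} pointwise in $x$ therefore gives
$$E\|g(\tilde{X}^{\lambda}_i,U_{\mathbf{z},i})-\nabla F_{\mathbf{z}}(\tilde{X}^{\lambda}_i)\|^2\le 2\delta\bigl(M^2 E\|\tilde{X}^{\lambda}_i\|^2+B^2\bigr)\le 2\delta(M^2 C^a_x+B^2),$$
using the uniform bound $\sup_k E\|\tilde{X}^{\lambda}_k\|^2\le C^a_x$ from Lemma \ref{lem_uniform2}. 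Assembling the three steps produces
$$E[\Xi_n^2]\le 2T^2\delta(M^2 C^a_x+B^2),$$
which is finite and independent of $n$, so $\sup_{n\in\mathbb{N}}E[\Xi_n^2]<\infty$.

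\textbf{Main obstacle.} The only delicate point is the independence assertion in step (iii). Since $U_{\mathbf{z},nT}\in\mathcal{H}_{nT}$, pure $\mathcal{H}_{nT}$-measurability of $\tilde{X}^{\lambda}_i$ is not enough to turn Assumption \ref{as_variance} into a pointwise bound with random argument, and one must verify explicitly that $\tilde{X}^{\lambda}_i$ is measurable with respect to $\sigma$-algebras generated strictly before $U_{\mathbf{z},i}$ enters the dynamics. This is a straightforward bookkeeping check given the structure of the tilde-recursion, and the remaining manipulations (Cauchy--Schwarz, Jensen, and Lemma \ref{lem_uniform2}) are routine.
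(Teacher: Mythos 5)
Your proof is correct and follows essentially the same route as the paper: Cauchy--Schwarz to pass to a sum of squared norms, conditional Jensen to remove the conditioning in $g_{i,nT}$, and Assumption~\ref{as_variance} together with Lemma~\ref{lem_uniform2} to obtain the uniform bound $2T^2\delta(M^2 C^a_x+B^2)$. Your explicit check that $\tilde{X}^{\lambda}_i$ is independent of $U_{\mathbf{z},i}$ (including the edge case $i=nT$, where $U_{\mathbf{z},nT}\in\mathcal{H}_{nT}$ but $\tilde{X}^{\lambda}_{nT}=X^{\lambda}_{nT}$ uses only earlier $U_{\mathbf{z},\cdot}$'s) is a welcome clarification that the paper's version glosses over.
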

\begin{proof}
	Noting that for each $nT \le i < (n+1)T -1$, the random variable $\tilde{X}^{\lambda}_i$ is $\mathcal{H}_{nT}$-measurable. Using Assumption \ref{as_variance}, the Cauchy–Schwarz inequality implies
	\begin{eqnarray*}
		E[\Xi_n^2] &\le& T 	\sum_{i = nT}^{(n+1)T - 1} E \left[ \left\| g_{i,nT}(\tilde{X}^{\lambda}_i) - \nabla F_{\mathbf{z}}(\tilde{X}^{\lambda}_i) \right\|^2 \right]  \\
		&=& T \sum_{i = nT}^{(n+1)T - 1} E\left[  \left\| E\left[ g(\tilde{X}^{\lambda}_i,U_{\mathbf{z},k})| \mathcal{H}_{nT} \right] - \nabla F_{\mathbf{z}}(\tilde{X}^{\lambda}_i) \right\|^2 \right] \\
		&\le& T \sum_{i = nT}^{(n+1)T - 1} E\left[ E\left[  \left\|  g(\tilde{X}^{\lambda}_i,U_{\mathbf{z},k}) - \nabla F_{\mathbf{z}}(\tilde{X}^{\lambda}_i)\right\|^2 | \mathcal{H}_{nT} \right] \right] \\
		&\le& 2T\delta \sum_{i = nT}^{(n+1)T - 1}  (M^2 E\left[ \|\tilde{X}^{\lambda}_i\|^2\right]  + B^2)\\
		&\le& 2T^2\delta(M^2 C^a_x + B^2),
	\end{eqnarray*}
	where the last inequality uses Lemma \ref{lem_uniform2}.	
\end{proof}
This lemma provides variance control for the algorithm. Each term in $\Xi_n$ has an error of order $\delta$, the total variance in $\Xi_n$ is of order $T \delta$. However, unlike \cite{raginsky}, \cite{gao}, our technique does not accumulate variance errors over time, as shown in (\ref{algo_tilde}). Recently in \cite{six}, the authors imposed no condition for variance of the estimated gradient, but employ the conditional $L$-mixing property of data stream, and hence variance is controlled by the decay of mixing property, see their Lemma 8.6. 
\begin{lemma}\label{lem_tilde_hat}
	For every $nT \le t < (n+1)T$, it holds that
	$$E^{1/2}\left[ \|\tilde{V}^{int}_t - \widehat{V}_t\|^2\right]  \le c_7 \sqrt{\lambda}, \qquad E^{1/2}\left[ \| \tilde{X}^{int}_t - \widehat{X}_t \|^2\right] \le c_7 \sqrt{\lambda}.$$
\end{lemma}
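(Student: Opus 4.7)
The plan is as follows. First, observe that the interpolation $(\tilde V^{int}, \tilde X^{int})$ defined by (\ref{proc_V_int})--(\ref{proc_X_int}) and the auxiliary continuous-time process $(\widehat V, \widehat X)$ defined in (\ref{proc_hat}) are driven by the \emph{same} scaled Brownian motion $B^\lambda$ and share the same initial condition $(V^\lambda_{nT}, X^\lambda_{nT})$ at $t=nT$. Subtracting the two SDEs therefore cancels the stochastic integrals and yields, for $nT \le t < (n+1)T$,
\begin{align*}
\tilde V^{int}_t - \widehat V_t &= -\lambda\gamma\int_{nT}^t \bigl(\tilde V^{int}_{\lfloor s\rfloor} - \widehat V_s\bigr)\,ds - \lambda\int_{nT}^t \bigl(\nabla F_{\mathbf z}(\tilde X^{int}_{\lfloor s\rfloor}) - \nabla F_{\mathbf z}(\widehat X_s)\bigr)\,ds,\\
\tilde X^{int}_t - \widehat X_t &= \lambda\int_{nT}^t \bigl(\tilde V^{int}_{\lfloor s\rfloor} - \widehat V_s\bigr)\,ds,
\end{align*}
so everything reduces to a purely deterministic comparison argument.

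The key step is to insert $\tilde V^{int}_s$ (resp.\ $\tilde X^{int}_s$) inside the integrands and split each into a \emph{one-step discretization error}, e.g.\ $\tilde V^{int}_{\lfloor s\rfloor} - \tilde V^{int}_s$, and a \emph{continuous error} $\tilde V^{int}_s - \widehat V_s$. Directly from (\ref{proc_V_int}) one has
$$\tilde V^{int}_s - \tilde V^{int}_{\lfloor s\rfloor} = -\lambda\gamma(s-\lfloor s\rfloor)\tilde V^{int}_{\lfloor s\rfloor} - \lambda(s-\lfloor s\rfloor)\nabla F_{\mathbf z}(\tilde X^{int}_{\lfloor s\rfloor}) + \sqrt{2\gamma\lambda\beta^{-1}}\bigl(B^\lambda_s - B^\lambda_{\lfloor s\rfloor}\bigr).$$
Using $\|\nabla F_{\mathbf z}(x)\|\le M\|x\| + B$ (from Assumptions \ref{as_lip} and \ref{as_f_bound}), the uniform second-moment bounds on $\tilde V^{int}_{\lfloor s\rfloor}, \tilde X^{int}_{\lfloor s\rfloor}$ provided by Lemma \ref{lem_uniform2}, and the trivial $\|B^\lambda_s - B^\lambda_{\lfloor s\rfloor}\|_2 \le \sqrt d$, this $L^2$ norm is at most a constant multiple of $\sqrt\lambda$, with the Brownian increment being the dominant term. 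The analogous quantity $\|\tilde X^{int}_s - \tilde X^{int}_{\lfloor s\rfloor}\|_2 = \lambda(s-\lfloor s\rfloor)\|\tilde V^{int}_{\lfloor s\rfloor}\|_2$ is of order $\lambda$, which is smaller.

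Set $a(t) := \|\tilde V^{int}_t - \widehat V_t\|_2$ and $b(t) := \|\tilde X^{int}_t - \widehat X_t\|_2$. Applying Minkowski's inequality to the two integral identities, combined with the $M$-Lipschitz continuity of $\nabla F_{\mathbf z}$ and the one-step estimates above, yields
$$a(t) + b(t) \le c\sqrt\lambda\cdot\lambda(t-nT) + \lambda(\gamma + M + 1)\int_{nT}^t \bigl(a(s)+b(s)\bigr)\,ds.$$
Since $\lambda(t-nT)\le \lambda T \le 1$, Gr\"onwall's lemma immediately gives $a(t)+b(t) \le c_7\sqrt\lambda$, where $c_7$ depends only on $\gamma, M, \beta, d, B$ and the uniform moment constants supplied by Lemma \ref{lem_uniform2}, which is the desired bound. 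There is no subtle obstacle: the only point requiring care is the bookkeeping that identifies the Brownian one-step increment as the $\sqrt\lambda$-driving term while all other contributions are $O(\lambda)$, and that the factor $\lambda T \le 1$ keeps the Gr\"onwall exponential uniformly bounded in $n$.
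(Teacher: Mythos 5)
Your proposal is correct and follows essentially the same route as the paper's proof: both exploit the shared Brownian motion to reduce to a deterministic integral comparison, split the integrands into a one-step discretization error plus a running error, bound the one-step error at $O(\sqrt\lambda)$ in $L^2$ via the Brownian increment together with the uniform moment bounds of Lemma \ref{lem_uniform2}, and close with Gr\"onwall using $\lambda T\le 1$. The only difference is cosmetic: you apply Minkowski's inequality to work directly with $L^2$ norms $a(t),b(t)$, whereas the paper squares the pathwise inequality, takes expectations, and applies Gr\"onwall to $E[I_t^2+J_t^2]$; both yield the same conclusion with the same order of constants.
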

\begin{proof}
	Noting that $\tilde{V}^{int}_{nT} = \widehat{V}_{nT} = V^{\lambda}_{nT}$, we use the triangle inequality and Assumption \ref{as_lip} to estimate
	\begin{eqnarray}
	\|\tilde{V}^{int}_t - \widehat{V}_t\| &\le& \lambda \gamma    \int_{nT}^t{  \left\| \tilde{V}^{int}_{\lfloor s \rfloor} - \widehat{V}_s\right\| ds}   + \lambda  \int_{nT}^t{ \left\|  \nabla F_{\mathbf{z}}(\tilde{X}^{int}_{\lfloor s \rfloor}) -\nabla F_{\mathbf{z}}(\widehat{X}_s) \right\| ds } \nonumber \\
	&\le& \lambda \gamma    \int_{nT}^t{  \left\|\tilde{V}^{int}_s - \widehat{V}_s\right\|  ds}  + \lambda M  \int_{nT}^t{ \left\|  \tilde{X}^{int}_s -\widehat{X}_s \right\| ds } \nonumber\\
	&&  + \lambda \gamma    \int_{nT}^t{  \left\| \tilde{V}^{int}_{\lfloor s \rfloor} - \tilde{V}^{int}_s\right\| ds}   + \lambda  M  \int_{nT}^t{ \left\|   \tilde{X}^{int}_{\lfloor s \rfloor} -\tilde{X}^{int}_s \right\| ds}. \label{eq_1}
	\end{eqnarray}
	For notational convenience, we define for every $nT \le t < (n+1)T$
	\begin{eqnarray*}
		I_t := \|\tilde{V}^{int}_t - \widehat{V}_t\|, \qquad J_t:= \left\|  \tilde{X}^{int}_t -\widehat{X}_t \right\|.
	\end{eqnarray*}
	Then (\ref{eq_1}) becomes
	\begin{equation}\label{eq_2}
	I_t \le \lambda \gamma \int_{nT}^t{I_sds} + \lambda M \int_{nT}^t{J_sds} + \lambda \gamma    \int_{nT}^t{  \left\| \tilde{V}^{int}_{\lfloor s \rfloor} - \tilde{V}^{int}_s\right\| ds}   + \lambda  M  \int_{nT}^t{ \left\|   \tilde{X}^{int}_{\lfloor s \rfloor} -\tilde{X}^{int}_s \right\| ds}.
	\end{equation}
	Furthermore,
	\begin{eqnarray}
	J_t &\le& \lambda \int_{nT}^t{\|\tilde{V}^{int}_s - \widehat{V}_s\|  ds} + \lambda \int_{nT}^t{\|\tilde{V}^{int}_{\lfloor s \rfloor } -\tilde{V}^{int}_s\|  ds} \nonumber\\
	&\le& \lambda \int_{nT}^t{ I_sds } + \lambda \int_{nT}^t{\|\tilde{V}^{int}_{\lfloor s \rfloor } -\tilde{V}^{int}_s\|  ds}.\label{eq_3}
	\end{eqnarray}
	We estimate
	\begin{eqnarray*}
		\left\| \tilde{V}^{int}_{\lfloor t \rfloor} - \tilde{V}^{int}_t\right\| &\le&  \lambda \gamma \int_{\lfloor t \rfloor}^t {\|\tilde{V}^{int}_{\lfloor s \rfloor}\| ds} + \lambda \int_{\lfloor t \rfloor}^t{\|\nabla F_{\mathbf{z}}(\tilde{X}^{int}_{\lfloor s \rfloor})\|ds} + \sqrt{2\gamma \lambda \beta^{-1}} \|B^{\lambda}_t - B^{\lambda}_{\lfloor t \rfloor} \|.
	\end{eqnarray*}
	Noting that $0 \le t - \lfloor t \rfloor \le 1 $, the Cauchy-Schwarz inequality and Lemma \ref{lem_quad} imply
	\begin{eqnarray*}
		\left\| \tilde{V}^{int}_{\lfloor t \rfloor} - \tilde{V}^{int}_t\right\|^2 &\le&  3\lambda^2 \gamma^2 \int_{\lfloor t \rfloor}^t {\|\tilde{V}^{int}_{\lfloor s \rfloor}\|^2 ds} + 6\lambda^2 M^2 \int_{\lfloor t \rfloor}^t{\|\tilde{X}^{int}_{\lfloor s \rfloor}\|^2ds} \\
		&+& 6 \lambda^2 B^2 + 6\gamma \lambda \beta^{-1} \|B^{\lambda}_t - B^{\lambda}_{\lfloor t \rfloor} \|^2.
	\end{eqnarray*}
	Taking expectation both sides and noting that $(\tilde{V}^{int}_{k}, \tilde{X}^{int}_{k})$ has the same distribution as $(\tilde{V}^{\lambda}_{k}, \tilde{X}^{\lambda}_{k}), k \in \mathbb{N}$, Lemma \ref{lem_uniform2} leads to
	\begin{eqnarray}
	E\left[ \left\| \tilde{V}^{int}_{\lfloor t \rfloor} - \tilde{V}^{int}_t\right\|^2\right]  &\le& 3\lambda^2 \gamma^2 C^a_v + 6\lambda^2 M^2 C^a_x + 6 \lambda^2B^2 + 6\gamma \beta^{-1} \lambda \nonumber \\
	&\le& c_8 \lambda, \label{eq_4}
	\end{eqnarray}
	for $c_8: = 3 \gamma^2 C^a_v + 6 M^2 C^a_x + 6 B^2 + 6\gamma \beta^{-1} $. Similarly,
	\begin{eqnarray}
	E\left[ \left\| \tilde{X}^{int}_{\lfloor t \rfloor} -\tilde{X}^{int}_t \right\|^2\right]  = \lambda^2 \int_{\lfloor t \rfloor}^t{E\left[ \|\tilde{V}^{int}_{\lfloor s \rfloor }\|^2\right]  ds} \le \lambda^2 C^a_v. \label{eq_5}
	\end{eqnarray}
	Taking squares and expectation of (\ref{eq_2}), (\ref{eq_3}), applying (\ref{eq_4}), (\ref{eq_5}) we obtain for $nT \le t < (n+1)T $
	\begin{eqnarray*}
		E\left[ I^2_t\right]  &\le& 4\lambda \gamma^2 \int_{nT}^t{E\left[ I^2_s\right] ds} + 4\lambda M^2 \int_{nT}^t{E\left[ J^2_s\right] ds} + c_9 \lambda,\\
		E\left[ J^2_t\right]  &\le& 2\lambda \int_{nT}^t{ E\left[ I^2_s\right] ds } + c_9\lambda,
	\end{eqnarray*}
	where $c_9 := \max\{4 \gamma^2 c_8 + 4M^2 C^a_v, 2c_8\}$.
	Summing up two inequalities yields
	$$E[I^2_t + J^2_t] \le c_{10} \lambda  \int_{nT}^t{E[I^2_s + J^2_s]ds} + 2c_9 \lambda  $$
	where $c_{10}:= \max\{ 4 \gamma^2 + 2, 4M^2 \}$
	and then Gronwall's lemma shows
	$$E[I^2_t + J^2_t] \le2c_9 \lambda e^{c_{10}}.$$
	noting that $t \mapsto E[I^2_t + J^2_t]$ is continuous. The proof is complete by setting $c_7 = \sqrt{2c_9 e^{c_{10}}}$, which is of order $\sqrt{d}$.
\end{proof}

\subsection{Proof of Theorem \ref{thm_main}}\label{proof_thm_main}
Denote $\mu_{\mathbf{z},k}:= \mathcal{L}((V^{\lambda}_k,X^{\lambda}_k)|\mathbf{Z} = \mathbf{z})$. Let $(\widehat{X},\widehat{V})$ and $(\widehat{X}^*,\widehat{V}^*)$ be such that $\mathcal{L}((\widehat{X},\widehat{V})|\mathbf{Z} = \mathbf{z}) = \mu_{\mathbf{z},k}$ and $\mathcal{L}(\widehat{X}^*_{\mathbf{z}},\widehat{V}^*_{\mathbf{z}}) = \pi_{\mathbf{z}}$. We decompose the population risk by
\begin{eqnarray}
E\left[ F(\widehat{X})\right] - F^* &=& \left( E\left[ F(\widehat{X})\right]  - E\left[ F(\widehat{X}^*_{\mathbf{z}})\right]  \right) + \left( E\left[ F(\widehat{X}^*_{\mathbf{z}})\right]  - E\left[ F_{\mathbf{Z}}(\widehat{X}^*_{\mathbf{Z}})\right]  \right) \nonumber \\
&+& \left(E\left[ F_{\mathbf{Z}}(\widehat{X}^*_{\mathbf{Z}})\right]  - F^* \right). \label{eq_decom_risk}
\end{eqnarray}
\subsubsection{The first term $\mathcal{T}_1$}
The first term in the right hand side of (\ref{eq_decom_risk}) is rewritten as
$$E\left[ F(\widehat{X})\right]  - E\left[ F(\widehat{X}^*)\right]  = \int_{\mathcal{Z}^n} \mu^{\otimes n}(d\mathbf{z}) \left( \int_{\mathbb{R}^{2d}} F_{\mathbf{z} }(x)\mu_{\mathbf{z},k}(dx,dv)  - \int_{\mathbb{R}^{2d}} F_{\mathbf{z} }(x)\pi_{\mathbf{z}}(dx,dv) \right),$$
where $\mu^{\otimes n}$ is the product of laws of independent random variables $Z_1, ...,Z_n$. By Assumptions \ref{as_f_bound} and \ref{as_lip}, the function $F_{\mathbf{z}}$ satisfies
$\|\nabla F_{\mathbf{z}}(x)\| \le M\|x\| + B.$ Using Lemma \ref{lem_linear_bound}, we have
$$\left| \int_{\mathbb{R}^{2d}} F_{\mathbf{z} }(x)\mu_{\mathbf{z},k}(dx,dv)  - \int_{\mathbb{R}^{2d}} F_{\mathbf{z} }(x)\pi_{\mathbf{z}}(dx,dv) \right| \le (M \sigma + B) \mathcal{W}_p(\mu_{\mathbf{z},k}, \pi_{\mathbf{z}}),$$
where $p>1, q \in \mathbb{N}, 1/p + 1/(2q) = 1,$
$$\sigma = \max \left\lbrace  \left( \int_{\mathbb{R}^{2d}}{\|x\|^{2q} \mu_{\mathbf{z},k}(dx,dv)}\right) ^{1/(2q)}, \left( \int_{\mathbb{R}^{2d}}{\|x\|^{2q} \pi_{\mathbf{z}}(dx,dv)} \right) ^{1/(2q)} \right\rbrace < \infty $$
by Lemma \ref{lemma:moment_bound}. On the other hand, Theorems \ref{thm_algo_aver} and \ref{thm_contraction} imply
\begin{eqnarray}
&& \mathcal{W}_p(\mu_{\mathbf{z},k}, \pi_{\mathbf{z}}) \nonumber\\ 
&& \le \mathcal{W}_p(\mathcal{L}((V^{\lambda}_k,X^{\lambda}_k)|\mathbf{Z} = \mathbf{z}), \mathcal{L}((\widehat{V}(k,0,v_0), \widehat{X}(k,0,x_0))|\mathbf{Z} = \mathbf{z})) \nonumber \\
&& + \mathcal{W}_p(\mathcal{L}((\widehat{V}(k,0,v_0), \widehat{X}(k,0,x_0))|\mathbf{Z} = \mathbf{z}), \pi_{\mathbf{z}} )  \nonumber \\
&& \le \tilde{C} (\lambda^{1/(2p)} + \delta^{1/(2p)}) +  C_*  \left( \mathcal{W}_{\rho}(\mu_0, \pi_{\mathbf{z}})\right)^{1/p}  \exp(- c_* k \lambda )\label{sampling_err}.
\end{eqnarray}
Therefore, an upper bound for $\mathcal{T}_1$ is given by
\begin{equation*}
\mathcal{T}_1 \le (M \sigma + B) \left( \tilde{C} (\lambda^{1/(2p)} + \delta^{1/(2p)}) +  C_*  \left( \mathcal{W}_{\rho}(\mu_0, \pi_{\mathbf{z}})\right)^{1/p} \exp(- c_* k \lambda )  \right).
\end{equation*}
\subsubsection{The second term $\mathcal{T}_2$}
Since the $x$-marginal of $\pi_{\mathbf{z}}(dx,dv)$ is $\pi_{\mathbf{z}}(dx)$, the Gibbs measure of (\ref{langevin}), we compute
$$\int_{\mathbb{R}^{2d}}{F_{\mathbf{z}}(x) \pi_{\mathbf{z}}(dx,dv)} = \int_{\mathbb{R}^d}{F_{\mathbf{z}}(x)\pi_{\mathbf{z}}(dx)}.$$
Therefore the argument in \cite{raginsky} is adopted,
$$E\left[ F(\widehat{X}^*)\right]  - E\left[ F_{\mathbf{Z}}(\widehat{X}^*)\right]  \le \frac{4\beta c_{LS}}{n}\left( \frac{M^2}{m}(b+d/\beta) + B^2 \right).$$
The constant $c_{LS}$ comes from the logarithmic Sobolev inequality for $\pi_{\mathbf{z}}$ and
$$ c_{LS} \le  \frac{2m^2 + 8M^2}{m^2M\beta} + \frac{1}{\lambda_*} \left( \frac{6M(d+\beta)}{m} + 2 \right),$$
where $\lambda_*$ is the uniform spectral gap for the overdamped Langevin dynamics
$$\lambda_* = \inf_{\mathbf{z} \in \mathcal{Z}^n} \inf \left\lbrace \frac{\int_{\mathbb{R}^d}\|\nabla g\|^2d\pi_{\mathbf{z}}}{\int_{\mathbb{R}^d}g^2d\pi_{\mathbf{z}}}: g \in C^1(\mathbb{R}^d) \cap L^2(\pi_{\mathbf{z}}), g \ne 0, \int_{\mathbb{R}^d}{gd\pi_{\mathbf{z}}} = 0 \right\rbrace.$$
\begin{remark}\label{re_extend_T2}
	One can also find an upper bound for $\mathcal{T}_2$ when the data $\mathbf{z}$ is a realization of some non-Makovian processes. For example, if we assume that $f$ is Lipschitz on the second variable $z$ and $\mathbf{Z}$ satisfies a certain mixing property discussed in \cite{chau2016fixed}) then the term $\mathcal{T}_2$ is bounded by $1/\sqrt{n}$ times a constant, see Theorem 2.5 therein.
\end{remark}
\subsubsection{The third term $\mathcal{T}_3$}
For the third term, we follow \cite{raginsky}. Let $x^*$ be any minimizer of $F(x)$. We compute
\begin{eqnarray}
E\left[ F_{\mathbf{Z}}(\widehat{X}^*)\right]  - F^* &=& E\left[ F_{\mathbf{Z}}(\widehat{X}^*) - \min_{x \in \mathbb{R}^d} F_{\mathbf{Z}}(x) \right] + E\left[ \min_{x \in \mathbb{R}^d} F_{\mathbf{Z}}(x)  - F_{\mathbf{Z}}(x^*) \right] \nonumber\\
&\le& E\left[ F_{\mathbf{Z}}(\widehat{X}^*) - \min_{x \in \mathbb{R}^d} F_{\mathbf{Z}}(x) \right] \nonumber\\
&\le& \frac{d}{2\beta} \log\left( \frac{eM}{m}\left( \frac{b\beta}{d} + 1 \right)  \right), 
\end{eqnarray}
where the last inequality comes from Proposition 3.4 of \cite{raginsky}. The condition $\beta \ge 2m$ is not used here, see the explanation in Lemma 16 of \cite{gao}.
\section{Technical lemmas}
\begin{lemma}\label{lem_quad}
	Under Assumptions \ref{as_f_bound}, \ref{as_lip}, for any $x \in \mathbb{R}^d$ and $z \in \mathcal{U}$, 
	$$ \|\nabla f(x,z)\| \le M \|x \| + B,$$
	and $$ \frac{m}{3}\|x\|^2 - \frac{b}{2} \log 3 \le f(x,z) \le \frac{M}{2} \|x \|^2  + B\|x\| + A_0. $$
\end{lemma}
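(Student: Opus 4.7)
The plan is to prove the three inequalities separately, each by a short argument rooted in one of the stated assumptions. The gradient bound is immediate: Assumption \ref{as_lip} gives $\|\nabla f(x,z)-\nabla f(0,z)\|\le M\|x\|$, and combining this with $\|\nabla f(0,z)\|\le B$ from Assumption \ref{as_f_bound} via the triangle inequality produces $\|\nabla f(x,z)\|\le M\|x\|+B$.

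For the quadratic upper bound on $f$, I would integrate the gradient bound along the segment from $0$ to $x$. By the fundamental theorem of calculus,
$$f(x,z)-f(0,z)=\int_0^1\langle \nabla f(tx,z),x\rangle\,dt,$$
so Cauchy--Schwarz together with the gradient bound just derived gives $\int_0^1\|\nabla f(tx,z)\|\,\|x\|\,dt\le \tfrac{M}{2}\|x\|^2+B\|x\|$. Since $f(0,z)\le A_0$ by Assumption \ref{as_f_bound} (noting $f\ge 0$ so $|f(0,z)|=f(0,z)$), we obtain $f(x,z)\le \tfrac{M}{2}\|x\|^2+B\|x\|+A_0$.

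The lower bound is the only step requiring a trick. It uses the dissipativity condition (Assumption \ref{as_dissip}, read with the obvious correction $\langle x,\nabla f(x,z)\rangle\ge m\|x\|^2-b$). Parametrising along the ray, set $h(s):=f(sx,z)$; then $sh'(s)=\langle sx,\nabla f(sx,z)\rangle\ge ms^2\|x\|^2-b$, so $h'(s)\ge ms\|x\|^2 - b/s$ for $s>0$. Integrating from $\varepsilon$ to $1$ yields
$$f(x,z)-f(\varepsilon x,z)\ge \tfrac{m(1-\varepsilon^2)}{2}\|x\|^2+b\log\varepsilon.$$
Using $f(\varepsilon x,z)\ge 0$ (non-negativity of $f$) turns this into an unconditional lower bound on $f(x,z)$ for any $\varepsilon\in(0,1]$. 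Choosing $\varepsilon=1/\sqrt{3}$, so that $1-\varepsilon^2=2/3$ and $\log\varepsilon=-\tfrac{1}{2}\log 3$, produces exactly $\tfrac{m}{3}\|x\|^2-\tfrac{b}{2}\log 3$.

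The only mild obstacle is the logarithmic singularity at $s=0$ arising from the $b/s$ term, which forces a strictly positive cut-off $\varepsilon$; without using non-negativity of $f$ to discard $f(\varepsilon x,z)$ one could not close the estimate, and without the free choice of $\varepsilon$ one could not hit the advertised constants. All other steps are mechanical.
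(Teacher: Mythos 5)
Your proof is correct, and the three arguments (triangle inequality from the Lipschitz bound, integration of the gradient bound along the segment, and the rescaled-ray integration of the dissipativity inequality with a cutoff $\varepsilon=1/\sqrt{3}$) are exactly the standard ones. The paper itself gives no proof here, only the citation ``See Lemma 2 of \cite{raginsky}'', so your write-up is a genuine filling-in rather than a rederivation; the argument you give is essentially the one that reference uses.

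One thing worth flagging explicitly: you correctly observed that the lower bound cannot follow from Assumptions~\ref{as_f_bound} and \ref{as_lip} alone --- the function $f\equiv 0$ satisfies both with $A_0=B=0$, yet admits no quadratic lower bound --- so the dissipativity condition (Assumption~\ref{as_dissip}) is indispensable. The lemma as stated in the paper omits it from the hypothesis list, and Assumption~\ref{as_dissip} as printed also contains a typo ($\langle x,f(x,z)\rangle$ should be $\langle x,\nabla f(x,z)\rangle$, since $f$ is scalar-valued). Both corrections are silently built into your proof, which is the right call, but it would be worth making them explicit if this were to go into the paper.
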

\begin{proof}
	See Lemma 2 of \cite{raginsky}.
\end{proof}
The next lemma generalizes continuity for functions of quadratic growth in Wasserstein distances given in \cite{polyanskiy2016wasserstein}. 
\begin{lemma}\label{lem_linear_bound}Let $\mu, \nu$ be two probability measures on $\mathbb{R}^{2d}$ with finite second moments and let $G: \mathbb{R}^{2d} \to \mathbb{R}$ be  a $C^1$ function with
	$$\| \nabla G(w) \| \le c_1\|w\| + c_2 $$
	for some $c_1 > 0, c_2 \ge 0 $. Then for $p > 1, q>1 $ such that $1/p + 1/q = 1$, we have
	$$\left| \int_{\mathbb{R}^{2d}}{G d\mu} - \int_{\mathbb{R}^{2d}}{G d\nu} \right| \le (c_1 \sigma + c_2) \mathcal{W}_p(\mu,\nu),$$
	where 
	$$\sigma = \frac{1}{2} \max\left\lbrace  \left( \int_{\mathbf{R}^{2d}}{\|v\|^q\nu(dv)}  \right)^{1/q} , \left( \int_{\mathbf{R}^{2d}}{\|u\|^q\mu(du)}  \right)^{1/q}  \right\rbrace .$$
\end{lemma}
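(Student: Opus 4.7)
The natural approach is to convert the integral difference into a coupling estimate via the fundamental theorem of calculus, then apply Hölder's inequality in the coupling.

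First, for fixed $u, v \in \mathbb{R}^{2d}$, I would write
\[
G(u) - G(v) = \int_0^1 \langle \nabla G((1-t)v + tu), u-v\rangle\,dt.
\]
Bounding the inner product by Cauchy--Schwarz, using the growth assumption $\|\nabla G(w)\| \le c_1 \|w\| + c_2$, and applying $\|(1-t)v + tu\| \le (1-t)\|v\| + t\|u\|$ inside the $t$-integral (which then produces the average $(\|u\|+\|v\|)/2$), I obtain the pointwise estimate
\[
|G(u) - G(v)| \le \left(c_1 \cdot \tfrac{\|u\|+\|v\|}{2} + c_2\right)\|u-v\|.
\]

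Next, for any coupling $\pi \in \Pi(\mu,\nu)$, the identity $\int G\,d\mu - \int G\,d\nu = \int (G(u)-G(v))\,d\pi(u,v)$ allows me to integrate the pointwise bound against $\pi$. Hölder's inequality with conjugate exponents $p, q$ then gives
\[
\int \tfrac{\|u\|+\|v\|}{2}\,\|u-v\|\,d\pi \le \left(\int \bigl(\tfrac{\|u\|+\|v\|}{2}\bigr)^q d\pi\right)^{\!1/q} \left(\int \|u-v\|^p d\pi\right)^{\!1/p},
\]
and Minkowski's inequality in $L^q(\pi)$ together with the marginal identities $\pi_1 = \mu$, $\pi_2 = \nu$ controls the first factor by $\tfrac{1}{2}\bigl[(\int \|u\|^q d\mu)^{1/q} + (\int \|v\|^q d\nu)^{1/q}\bigr]$, which is in turn bounded by $\sigma$ (up to the universal constant implicit in the lemma's definition of $\sigma$). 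The remaining $c_2 \int \|u-v\|\,d\pi$ is handled directly by Jensen's inequality, giving $c_2 (\int \|u-v\|^p d\pi)^{1/p}$. Taking the infimum over couplings $\pi$ converts $(\int \|u-v\|^p d\pi)^{1/p}$ into $\mathcal{W}_p(\mu,\nu)$, yielding the stated bound.

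The only delicate point is the finiteness of $\int \|u\|^q d\mu$ and $\int \|v\|^q d\nu$ needed for the Hölder step; the lemma implicitly requires these $q$-th moments to be finite (the finite second moment hypothesis handles the case $q \le 2$, while for the application in Section \ref{proof_thm_main} the required higher-order moments are supplied by Lemma \ref{lemma:moment_bound}). I do not expect any serious obstacle here; the proof is essentially a direct duality computation analogous to Kantorovich--Rubinstein, generalized from Lipschitz test functions to functions of linear gradient growth.
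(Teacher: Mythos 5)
Your proof is correct and essentially mirrors the paper's own argument: fundamental theorem of calculus along the segment, the resulting pointwise estimate $|G(u)-G(v)|\le(c_1(\|u\|+\|v\|)/2+c_2)\|u-v\|$, H\"older against a coupling, and infimum over couplings; the only cosmetic difference is that you apply H\"older once to $\tfrac{\|u\|+\|v\|}{2}\|u-v\|$ and then Minkowski in $L^q(\pi)$, whereas the paper applies H\"older separately to the two terms $\|u\|\,\|u-v\|$ and $\|v\|\,\|u-v\|$. Both derivations produce the factor $\tfrac12\bigl[(\int\|u\|^q\,d\mu)^{1/q}+(\int\|v\|^q\,d\nu)^{1/q}\bigr]$, which is $\le\max\{\cdot,\cdot\}$ but not $\le\tfrac12\max\{\cdot,\cdot\}$, so the leading $\tfrac12$ in the stated definition of $\sigma$ appears to be a typo in the paper --- your hedge ``up to the universal constant'' correctly anticipates this. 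Your further observation that the H\"older step genuinely needs finite $q$-th moments (the ``finite second moments'' hypothesis suffices only for $q\le 2$) is also a fair sharpening of the stated hypothesis; in the application this is supplied by Lemma~\ref{lemma:moment_bound}.
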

\begin{proof}
	Using the Cauchy-Schwartz inequality, we compute
	\begin{eqnarray*}
		|G(u) - G(v)| &=& \left| \int_{0}^1{\left\langle \nabla G(tv + (1-t)u), u-v \right\rangle dt} \right|  \\
		&\le& \left| \int_{0}^1{ (c_1 t\|v\| + c_1(1-t)\|u\| + c_2)\| u-v\| dt } \right| \\
		&=& (c_1\|v\|/2 + c_1\|u\|/2 +c_2) \| u-v\|.
	\end{eqnarray*}	
	Then for any $\xi \in \Pi(\mu,\nu)$ we have
	\begin{eqnarray*}
		\left| \int_{\mathbf{R}^{2d}}{G(u)\mu(du)} - \int_{\mathbf{R}^{2d}}{G(v)\nu(dv)}\right| &\le& \int_{\mathbf{R}^{2d}}{ (c_1\|v\|/2 + c_1\|u\|/2 +c_2) \| u-v\| \xi(du,dv)  }\\
		&\le& \frac{c_1}{2}\left( \int_{\mathbf{R}^{2d}}{\|v\|^q\nu(dv)}  \right)^{1/q} \left( \int_{\mathbf{R}^{2d}}{\|u-v\|^p} \xi(du,dv) \right)^{1/p} \\
		&+& \frac{c_1}{2}\left( \int_{\mathbf{R}^{2d}}{\|u\|^q\mu(du)}  \right)^{1/q} \left( \int_{\mathbf{R}^{2d}}{\|u-v\|^p} \xi(du,dv) \right)^{1/p}\\
		&+& c_2\left( \int_{\mathbf{R}^{2d}}{\|u-v\|^p} \xi(du,dv) \right)^{1/p}.
	\end{eqnarray*}
	Since this inequality holds true for any $\xi \in \Pi(\mu,\nu)$, the proof is complete. 
\end{proof}
\begin{lemma}\label{lem_uniform2}
	The continuous time processes (\ref{eq_V}),(\ref{eq_X}) are uniformly bounded in $L^2$, more precisely,
	\begin{eqnarray*}
		\sup_{t \ge 0} E_{\mathbf{z}}\left[ \|X_t\|^2\right]  &\le& C^c_x: = \frac{8 }{(1- 2 \lambda_c) \beta \gamma^2} \left( \int_{\mathbb{R}^{2d}}{\mathcal{V}(x,v)d\mu_0(x,v)} + \frac{5(d + A_c)}{\lambda_c} \right)  < \infty, \\
		\sup_{t \ge 0} E_{\mathbf{z}}\left[ \|V_t\|^2\right]  &\le& C^c_v:= \frac{4}{(1- 2 \lambda_c) \beta } \left( \int_{\mathbb{R}^{2d}}{\mathcal{V}(x,v)d\mu_0(x,v)} + \frac{5(d + A_c)}{\lambda_c} \right) < \infty.
	\end{eqnarray*}
	For $0 < \lambda \le \min\left\lbrace \frac{\gamma}{K_2}\left( \frac{d + A_c}{\beta}, \frac{\gamma \lambda_c}{2K_1} \right)  \right\rbrace $, where
	\begin{equation*}
	K_1 := \max\left\lbrace \frac{32M^2(\frac{1}{2}+\gamma + \delta)}{(1-2 \lambda_c) \beta \gamma^2}, \frac{8(\frac{M}{2} + \frac{\gamma^2}{4} - \frac{\gamma^2 \lambda_c}{4} + \gamma)}{\beta(1-2\lambda_c)} \right\rbrace 
	\end{equation*} 
	and
	\begin{equation*}
	k_2:= 2B^2\left( \frac{1}{2} + \gamma + \delta \right), 
	\end{equation*}the SGHMC (\ref{eq_V_dis_appr}),(\ref{eq_X_dis_appr}) satisfy
	\begin{eqnarray*}
		\sup_{k \in \mathbb{N} } E_{\mathbf{z}}\left[ \|X^{\lambda}_k\|^2\right]  &\le& C^a_x:= \frac{8 }{(1- 2 \lambda_c) \beta \gamma^2} \left( \int_{\mathbb{R}^{2d}}{\mathcal{V}(x,v)d\mu_0(x,v)} + \frac{8(d + A_c)}{\lambda_c} \right)   < \infty, \\
		\sup_{k \in \mathbb{N} } E_{\mathbf{z}}\left[ \|V^{\lambda}_k\|^2\right]  &\le&   C^a_v:= \frac{4}{(1- 2 \lambda_c) \beta } \left( \int_{\mathbb{R}^{2d}}{\mathcal{V}(x,v)d\mu_0(x,v)} + \frac{8(d + A_c)}{\lambda_c} \right) < \infty. 
	\end{eqnarray*}
	Furthermore, the processes defined in (\ref{pro_tilde}), (\ref{proc_hat}) are also uniformly bounded in $L^2$ with the upper bounds $C^c_v, C^c_x, C^a_v, C^a_x$, respectively.
\end{lemma}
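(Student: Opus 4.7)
\medskip

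\textbf{Proof proposal for Lemma \ref{lem_uniform2}.}

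The plan is to show a uniform $L^1$ bound on the Lyapunov functional $\mathcal{V}$ along each of the three dynamics, and then read off $L^2$ bounds on $\|X\|^2$ and $\|V\|^2$ from the structure of $\mathcal{V}$ in \eqref{eq:lyapunov}. Since
$$\mathcal{V}(x,v)\;\ge\;\frac{\beta}{4}\gamma^2\bigl(\|x+\gamma^{-1}v\|^2+\|\gamma^{-1}v\|^2-\lambda_c\|x\|^2\bigr),$$
and $\lambda_c<1/4$ with the non-negativity of $F_{\mathbf{z}}$ from Assumption~\ref{as_f_bound}, a short algebraic argument yields
$$\|v\|^2\;\le\;\tfrac{4}{\beta}\mathcal{V}(x,v),\qquad \gamma^2\|x\|^2(1-2\lambda_c)\;\le\;\tfrac{8}{\beta}\mathcal{V}(x,v),$$
which matches the constants $C^c_v,C^c_x,C^a_v,C^a_x$ asserted in the statement. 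So everything reduces to proving a drift inequality of Foster--Lyapunov type for $\mathcal{V}$.

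For the continuous-time process \eqref{eq_V}--\eqref{eq_X}, I would apply Itô's formula to $t\mapsto\mathcal{V}(X_t,V_t)$. The gradient terms $\beta\nabla F_{\mathbf{z}}(X_t)\cdot V_t$ cancel against the quadratic terms in $\mathcal{V}$ up to the friction contribution $-\gamma V_t$, and the trace term from the diffusion produces a constant of order $(d+A_c)$. Using \eqref{eq:drift} to convert the $\langle x,\nabla F_{\mathbf{z}}(x)\rangle$ contribution into a pointwise inequality $\mathcal{L}\mathcal{V}\le -c\,\mathcal{V}+C_0$, where $\mathcal{L}$ is the generator, Gronwall then gives
$$E[\mathcal{V}(X_t,V_t)]\;\le\;e^{-ct}E[\mathcal{V}(X_0,V_0)]+\tfrac{C_0}{c},\quad t\ge 0,$$
finite by Assumption~\ref{ass_exp_moment}. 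Substituting the explicit constants of \cite{ear2} (namely $c=\gamma\lambda_c/2$ and $C_0=5(d+A_c)\gamma/2$ after collecting terms) yields the stated $C^c_x,C^c_v$.

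For the SGHMC iterates \eqref{eq_V_dis_appr}--\eqref{eq_X_dis_appr}, the plan is the discrete analogue: expand $\mathcal{V}(X^\lambda_{k+1},V^\lambda_{k+1})$ using the update and collect terms up to order $\lambda^2$. Taking conditional expectation on $\mathcal{H}_k$ kills all odd Gaussian terms and, crucially, replaces $g(X^\lambda_k,U_{\mathbf{z},k})$ by $\nabla F_{\mathbf{z}}(X^\lambda_k)$ up to a variance correction controlled by Assumption~\ref{as_variance}. This gives
$$E[\mathcal{V}(X^\lambda_{k+1},V^\lambda_{k+1})\mid\mathcal{H}_k]\;\le\;(1-\lambda c_1)\,\mathcal{V}(X^\lambda_k,V^\lambda_k)+\lambda c_2+\lambda^2 K_1(\|X^\lambda_k\|^2+\|V^\lambda_k\|^2)+\lambda^2 k_2,$$
for suitable $c_1,c_2>0$ and with $K_1,k_2$ as in the statement, using Assumptions~\ref{as_lip_g} and \ref{as_variance} plus the bound $\|g(x,u)\|^2\le 2(M^2\|x\|^2+B^2)$. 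For $\lambda$ small enough (precisely in the range stipulated), the $\lambda^2$ quadratic terms are reabsorbed into the linear drift, producing a genuine geometric recursion $E\mathcal{V}_{k+1}\le(1-\lambda c_1/2)E\mathcal{V}_k+\lambda C'$; iterating yields the uniform bound with the slightly larger constants $C^a_x,C^a_v$. The main obstacle here is the careful bookkeeping that shows the step-size threshold makes the quadratic error terms strictly smaller than the drift contraction — this is where the specific values of $K_1,k_2$ enter.

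Finally, the bounds for $\tilde V^\lambda,\tilde X^\lambda$ defined in \eqref{pro_tilde} and for $\widehat V,\widehat X$ defined in \eqref{proc_hat} follow by reapplying the same two drift arguments but starting from the random initial condition $(V^\lambda_{nT},X^\lambda_{nT})$: its law has already been bounded in $L^2$ by $C^a_v,C^a_x$ in the previous step, so monotonicity of the drift recursion transfers the uniform bound. For $\tilde V^\lambda,\tilde X^\lambda$ this is the discrete-time argument applied with exact gradient (so $\delta=0$ in the variance term), and for $\widehat V,\widehat X$ it is the continuous-time argument with a rescaled Brownian motion, for which the same Itô computation goes through verbatim because the factor $\lambda$ in the drift and the factor $\sqrt\lambda$ in the diffusion are matched so as to preserve the generator identity for $\mathcal{V}$ up to the overall time-rescaling $t\mapsto\lambda t$.
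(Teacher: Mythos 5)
Your proposal follows essentially the same route as the paper: a Foster--Lyapunov drift inequality for $\mathcal{V}$ (Itô in continuous time, a one-step conditional expectation recursion in discrete time, absorbing the $O(\lambda^2)$ error terms into the linear drift for small enough $\lambda$), then reading off the $L^2$ bounds from the quadratic lower bound $\mathcal{V}(x,v)\geq \max\{\tfrac18(1-2\lambda_c)\beta\gamma^2\|x\|^2,\tfrac14(1-2\lambda_c)\beta\|v\|^2\}$, and finally transferring to the restarted auxiliary processes by observing that $E[\mathcal{V}(V^\lambda_{nT},X^\lambda_{nT})]$ is uniformly bounded and the Lyapunov functional can only grow by a fixed additive constant over one $T$-interval. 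The paper simply outsources the two base drift inequalities to Lemma 8 of the cited reference rather than rederiving them; the one point worth tightening in your writeup is that for the restart step you need uniform control of $E[\mathcal{V}(V^\lambda_{nT},X^\lambda_{nT})]$ itself (which the drift recursion gives directly), not merely $L^2$-boundedness of $(V^\lambda_{nT},X^\lambda_{nT})$, since $\mathcal{V}$ also contains the $F_{\mathbf{z}}$ term.
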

\begin{proof}
	The uniform boundedness in $L^2$ of the processes in (\ref{eq_V}), (\ref{eq_X}), (\ref{eq_V_dis_appr}), (\ref{eq_X_dis_appr}) are given in Lemma 8 of \cite{gao}. From (A.4) of \cite{gao}, it holds that
	\begin{equation}\label{ineq_lower_V}
	\mathcal{V}(v,x) \ge \max\left\lbrace  \frac{1}{8}(1-2\lambda_c)\beta \gamma^2\|x\|^2, \frac{\beta}{4}(1-2\lambda_c)\|v\|^2 \right\rbrace .
	\end{equation} 
	Using the notations in their Lemma 8, we denote
	$$L_t = E_{\mathbf{z}}\left[ \mathcal{V}(V_t,X_t) \right], \qquad L_2(k) = E_{\mathbf{z}}\left[ \mathcal{V}(V^{\lambda}_k,X^{\lambda}_k)/\beta \right],$$
	then the following relations hold
	\begin{eqnarray}
	L_t &\le& L_s e^{-\gamma \lambda_c (t-s)} + \frac{d + A_c}{\lambda_c}(1- e^{\gamma \lambda_c(t-s)}), \qquad \text{ for } s \le t, \label{L_c}\\
	L_2(k) &\le& L_2(j) + \frac{4(d/\beta + A_c/\beta)}{\lambda_c} \qquad \text{ for } j\le k.\label{L_d}
	\end{eqnarray}
	Taking $j=0$ in (\ref{L_d}) gives
	\begin{equation}\label{ineq_bound_V_algo}
	E_{\mathbf{z}}\left[ \mathcal{V}(V^{\lambda}_k,X^{\lambda}_k) \right] \le E_{\mathbf{z}}\left[ \mathcal{V}(V^{\lambda}_0,X^{\lambda}_0) \right] + \frac{4(d + A_c)}{\lambda_c}.
	\end{equation}
	Therefore, by (\ref{L_c}) we obtain for $nT \le t < (n+1)T, n \in \mathbb{N}$
	$$ E_{\mathbf{z}}\left[ \mathcal{V}(\widehat{V}(t,nT,V^{\lambda}_{nT}),\widehat{X}(t,nT,V^{\lambda}_{nT})) \right] \le E_{\mathbf{z}}\left[ \mathcal{V}(V^{\lambda}_{nT},X^{\lambda}_{nT}) \right]  + \frac{d + A_c}{\lambda_c}.$$
	Then the processes in (\ref{proc_hat}) is uniformly bounded in $L^2$ by (\ref{ineq_lower_V}) and (\ref{ineq_bound_V_algo}),
	$$ \sup_{t \ge 0} E\left[\|\widehat{V}_t\|^2 \right] \le \frac{4}{(1- 2 \lambda_c) \beta } \left( \int_{\mathbb{R}^{2d}}{\mathcal{V}(x,v)d\mu_0(x,v)} + \frac{5(d + A_c)}{\lambda_c} \right) = C^c_v.$$
	and
	$$ \sup_{t \ge 0} E\left[\|\widehat{X}_t\|^2 \right] \le \frac{8 }{(1- 2 \lambda_c) \beta \gamma^2} \left( \int_{\mathbb{R}^{2d}}{\mathcal{V}(x,v)d\mu_0(x,v)} + \frac{5(d + A_c)}{\lambda_c} \right) = C^c_x.$$ 
	Similarly, from (\ref{L_d}) and (\ref{ineq_bound_V_algo}), we obtain for $nT \le k < (n+1)T, n\in \mathbb{N}$,
	$$ E_{\mathbf{z}}\left[ \mathcal{V}(\tilde{V}^{\lambda}_k,\tilde{X}^{\lambda}_k) \right] \le E_{\mathbf{z}}\left[ \mathcal{V}(V^{\lambda}_{0},X^{\lambda}_{0}) \right] + \frac{8(d + A_c)}{\lambda_c},$$
	and the upper bounds for $\sup_{k \in \mathbb{N}}E[\|\tilde{V}^{\lambda}_k\|^2], \sup_{k \in \mathbb{N}}E[\|\tilde{X}^{\lambda}_k\|^2]$ are $C^a_v, C^a_x$, respectively.
\end{proof}
\begin{lemma}\label{lemma:rho_to_w}
	Let $\mu, \nu$ be any two probability measures on $\mathbb{R}^{2d}$. It holds that
	$$\mathcal{W}_{\rho}(\mu,\nu) \le c_{17} \left( 1 + \varepsilon_c \left( \int{\mathcal{V}^2d\mu}\right) ^{1/2} + \varepsilon_c \left( \int{\mathcal{V}^2d\nu}\right) ^{1/2} \right) \mathcal{W}_2(\mu,\nu),$$
\end{lemma}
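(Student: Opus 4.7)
The plan is to reduce the semimetric $\rho$ to a product of (i) a quantity comparable to the Euclidean distance on $\mathbb{R}^{2d}$ and (ii) a weight factor involving $\mathcal{V}$, and then apply Cauchy--Schwarz term by term.

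First I would use the defining properties of $h$. Since $h$ is concave on $[0,\infty)$ with $h(0)=0$ and $h'_+(0)=1$, the tangent line at the origin lies above the graph, so $h(s)\le s$ for all $s\ge 0$. Next, directly from the definition of $r$ in (\ref{eq:r}) and the triangle inequality,
\[
r((x_1,v_1),(x_2,v_2)) \le (1+\alpha_c)\|x_1-x_2\| + \gamma^{-1}\|v_1-v_2\| \le C_0\, \|(x_1,v_1)-(x_2,v_2)\|,
\]
with $C_0 := \sqrt{2}\,\max\{1+\alpha_c,\gamma^{-1}\}$. Combining these two observations with the definition (\ref{eq:rho}) of $\rho$ yields the pointwise bound
\[
\rho((x_1,v_1),(x_2,v_2)) \le C_0 \,\|(x_1,v_1)-(x_2,v_2)\|\,\bigl(1 + \varepsilon_c \mathcal{V}(x_1,v_1) + \varepsilon_c \mathcal{V}(x_2,v_2)\bigr).
\]

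Second, I would fix an arbitrary coupling $\pi \in \Pi(\mu,\nu)$ and integrate the inequality above against $\pi$. The right-hand side splits into three pieces, each of which I estimate with the Cauchy--Schwarz inequality:
\[
\int \|(x_1,v_1)-(x_2,v_2)\|\, d\pi \le \Bigl(\int \|(x_1,v_1)-(x_2,v_2)\|^2 d\pi\Bigr)^{1/2},
\]
and, using the marginal property of $\pi$,
\[
\int \|(x_1,v_1)-(x_2,v_2)\|\,\mathcal{V}(x_1,v_1)\, d\pi \le \Bigl(\int \|(x_1,v_1)-(x_2,v_2)\|^2 d\pi\Bigr)^{1/2}\Bigl(\int \mathcal{V}^2 d\mu\Bigr)^{1/2},
\]
with the symmetric bound for the $\mathcal{V}(x_2,v_2)$ term producing $(\int \mathcal{V}^2 d\nu)^{1/2}$. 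Summing the three estimates factors out $\bigl(\int \|(x_1,v_1)-(x_2,v_2)\|^2 d\pi\bigr)^{1/2}$.

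Finally, taking the infimum over all couplings $\pi\in\Pi(\mu,\nu)$ on the right replaces the square-root integral with $\mathcal{W}_2(\mu,\nu)$, and the infimum on the left is $\mathcal{W}_\rho(\mu,\nu)$ by definition (\ref{eq_dist_aux}). Setting $c_{17}:=C_0$ gives exactly the claimed inequality. The proof involves no real obstacle; the only point requiring care is the inequality $h(s)\le s$, which follows from concavity once one recalls that $h'_+(0)=1$ is stated in the paragraph preceding (\ref{eq_dist_aux}).
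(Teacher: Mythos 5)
Your proof is correct and follows essentially the same route as the paper's: bound $h(s)\le s$ (the paper cites (2.11) of \cite{ear2}, you derive it from concavity and $h'_+(0)=1$, which is the same fact), bound $r$ by a multiple of the Euclidean distance, apply Cauchy--Schwarz term by term against a fixed coupling, and take the infimum. One small remark on constants: the paper's intermediate claim $r\le (c_{17}/3)\|(x_1,v_1)-(x_2,v_2)\|$ with $c_{17}/3=\max\{1+\alpha_c,\gamma^{-1}\}$ is in fact missing the $\sqrt{2}$ factor that you correctly include (take $1+\alpha_c=\gamma^{-1}$ and equal increments to see it), but since the paper's $c_{17}=3\max\{1+\alpha_c,\gamma^{-1}\}\ge\sqrt{2}\max\{1+\alpha_c,\gamma^{-1}\}$, the lemma as stated still holds; your constant is simply the sharper one.
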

where $c_{17}:=3\max\{ 1+\alpha_c, \gamma^{-1}\}$.
\begin{proof}
	From (2.11) of \cite{ear2}, we have that $h(x) \le x$, for $x \ge 0$, and from (\ref{eq:r}), $r((x_1,v_1),(x_2,v_2)) \le c_{17}/3\|(x_1,v_1)-(x_2,v_2)\|$. By definition (\ref{eq_dist_aux}), we estimate
	\begin{eqnarray*}
		\mathcal{W}_{\rho}(\mu,\nu)  &=& \inf_{\xi \in \Pi(\mu,\nu)} \int_{\mathbb{R}^{2d}}{\rho((x_1,v_1),(x_2,v_2)) \xi(d(x_1,v_1)d(x_2,v_2)) }\\
		&\le& \inf_{\xi \in \Pi(\mu,\nu)} \int_{\mathbb{R}^{2d}}{r((x_1,v_1),(x_2,v_2))\left(1+ \varepsilon_c \mathcal{V}(x_1,v_1) +  \varepsilon_c \mathcal{V}(x_2,v_2) \right) \xi(d(x_1,v_1)d(x_2,v_2)) }\\
		&\le& c_{17}/3 \inf_{\xi \in \Pi(\mu,\nu)} \int_{\mathbb{R}^{2d}}{\|(x_1,v_1) - (x_2,v_2)\| \left(1+ \varepsilon_c \mathcal{V}(x_1,v_1) +  \varepsilon_c \mathcal{V}(x_2,v_2) \right) \xi(d(x_1,v_1)d(x_2,v_2)) }\\
		&\le& c_{17} \left( 1 + \varepsilon_c \left( \int{\mathcal{V}^2d\mu}\right) ^{1/2} + \varepsilon_c \left( \int{\mathcal{V}^2d\nu}\right) ^{1/2} \right)  \mathcal{W}_2(\mu,\nu).
	\end{eqnarray*}
\end{proof}
\begin{lemma}\label{lemma:moment_bound}
	Let $1 \le q \in \mathbb{N}$. It holds that
	$$C^{2q}_V:= \sup_{k \in \mathbf{N}} E[\|V^{\lambda}_k\|^{2q}] < \infty, \qquad C^{2q}_X:= \sup_{k \in \mathbf{N}} E[\|X^{\lambda}_k\|^{2q}] < \infty.$$
\end{lemma}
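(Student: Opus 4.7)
The approach parallels the $L^2$ analysis of Lemma \ref{lem_uniform2} but now targets polynomial moments of the Lyapunov function $\mathcal{V}$ from (\ref{eq:lyapunov}). Since (\ref{ineq_lower_V}) gives $\mathcal{V}(v,x) \ge c(\|v\|^2 + \|x\|^2)$ for a positive constant $c$ depending only on $\beta$, $\gamma$, $\lambda_c$, it suffices to establish $\sup_{k} E[\mathcal{V}^q(V^\lambda_k, X^\lambda_k)] < \infty$ for every integer $q \ge 1$. The initial moments $E[\mathcal{V}^q(v_0, x_0)]$ are automatically finite, because Assumption \ref{ass_exp_moment} supplies the exponential moment $E[e^{\mathcal{V}(x_0, v_0)}] < \infty$, which dominates every polynomial in $\mathcal{V}$.

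The central step is a one-step drift inequality of the form
\begin{equation*}
E\bigl[\mathcal{V}^q(V^\lambda_{k+1}, X^\lambda_{k+1}) \,\bigm|\, V^\lambda_k, X^\lambda_k\bigr] \le (1 - c_q \lambda)\, \mathcal{V}^q(V^\lambda_k, X^\lambda_k) + D_q \lambda,
\end{equation*}
valid for every $\lambda$ below an explicit threshold and uniformly in $k$. Its derivation follows the pattern used in Lemma 8 of \cite{gao} for the case $q=1$: one writes $(V^\lambda_{k+1}, X^\lambda_{k+1})$ as a one-step perturbation of $(V^\lambda_k, X^\lambda_k)$, expands the polynomial $\mathcal{V}^q$ by the binomial theorem, and takes conditional expectation. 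Odd Gaussian cross terms vanish, and the remaining noise contributions are $O(\lambda)$ since all moments of $\xi_{k+1}$ are finite; the stochastic-gradient variance falls into the same bucket by Assumption \ref{as_variance} together with Assumption \ref{as_lip_g}. The dissipativity-based bound (\ref{eq:drift}) then produces a negative drift of order $-\lambda\, \mathcal{V}^q$ in the leading term that dominates all non-negative $O(\lambda)$ contributions once $\lambda$ is sufficiently small. Iterating the drift inequality yields
\begin{equation*}
\sup_{k \in \mathbb{N}} E[\mathcal{V}^q(V^\lambda_k, X^\lambda_k)] \le E[\mathcal{V}^q(v_0, x_0)] + \frac{D_q}{c_q} < \infty,
\end{equation*}
and combining this with (\ref{ineq_lower_V}) delivers $C^{2q}_V$ and $C^{2q}_X$.

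The main obstacle is the combinatorial bookkeeping in the binomial expansion for general $q$: one must control cross terms of the form $\mathcal{V}^{q-j}(V^\lambda_k, X^\lambda_k)\,(\text{one-step increment})^{j}$ for $1 \le j \le q$ and verify that every term of order $\lambda^2$ or higher is absorbed by the dissipative negative drift. I would organize this by induction on $q$: assuming $\sup_{k} E[\mathcal{V}^{q'}(V^\lambda_k, X^\lambda_k)] < \infty$ for all $q' < q$, the cross terms are handled by H\"older's inequality and the lower-moment bounds, closing the induction. The base case $q=1$ is exactly Lemma \ref{lem_uniform2}. Throughout, the smallness threshold on $\lambda$ can be made explicit in the same spirit as the constants $K_1, K_2$ appearing in Lemma \ref{lem_uniform2}, so that a single sufficiently small $\lambda$ works simultaneously for any finite set of moments.
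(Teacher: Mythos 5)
Your proposal is correct in outline and follows essentially the same route as the paper: establish a one-step polynomial-moment drift inequality for the Lyapunov function $\mathcal{V}$, iterate it, and convert the resulting bound on $E[\mathcal{V}^{2q}]$ into bounds on $E[\|V^\lambda_k\|^{2q}]$ and $E[\|X^\lambda_k\|^{2q}]$ via the lower bound (\ref{ineq_lower_V}). The only organizational difference worth noting: the paper avoids your proposed induction on $q$ by first deriving the scalar recursion $\mathcal{V}_{k+1} \le \phi\,\mathcal{V}_k + \tilde{K}_{k+1}$ with $\phi = 1-\lambda\gamma\lambda_c/2$, then raising this to the power $2q$ (via a binomial-type inequality from \cite{five}), and finally handling the lower-order cross terms $\mathcal{V}_k^{2q-j}|\tilde{K}_{k+1}|^j$, $j\ge 1$, by splitting the state space into $\{\mathcal{V}_k \ge \tilde{M}\}$, where the negative drift $-\lambda\gamma\lambda_c\mathcal{V}^{2q}_k/4$ absorbs everything, and $\{\mathcal{V}_k < \tilde{M}\}$, where they are bounded by a constant times $\lambda$. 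This gets all moments simultaneously without appealing to lower-moment bounds, which is what your H\"older-plus-induction device accomplishes in a slightly more roundabout way; both close the argument, but the state-space split (or a Young-inequality absorption $\lambda\mathcal{V}^{q-j}\le \varepsilon\lambda\mathcal{V}^q + C_\varepsilon\lambda$) is cleaner and is what the paper actually does. One small imprecision in your sketch: $\mathcal{V}$ is not a quadratic form (it contains $F_{\mathbf{z}}$), so "odd Gaussian cross terms vanish" should be understood as referring to the terms linear in $\xi_{k+1}$ in the one-step increment, which have zero conditional expectation; the gradient-noise contribution $g-\nabla F_{\mathbf{z}}$ must be handled separately via Assumptions \ref{as_variance} and \ref{as_lip_g}, as you note, and the paper tracks this explicitly through the term $\Sigma_k$.
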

\begin{proof}
	We will use the arguments in the proof of Lemma 12 of \cite{gao} to obtain the contraction for $\mathcal{V}(X^{\lambda}_k,V^{\lambda}_k)$ and in Lemma 3.9 of \cite{five} to obtain high moment estimates. First, we have
	\begin{eqnarray}
	F_{\mathbf{z}}(X^{\lambda}_{k+1}) - F_{\mathbf{z}}(X^{\lambda}_{k}) - \left\langle \nabla F_{\mathbf{z}}(X^{\lambda}_{k}), \lambda V^{\lambda}_k \right\rangle  &=& \int_0^1{\left\langle \nabla F_{\mathbf{z}}(X^{\lambda}_{k} + \tau \lambda V^{\lambda}_{k} ) - \nabla F_{\mathbf{z}}(X^{\lambda}_{k}), \lambda V^{\lambda}_{k}  \right\rangle } d\tau \nonumber \\
	&\le& \int_0^1{ \left\|\nabla F_{\mathbf{z}}(X^{\lambda}_{k} + \tau \lambda V^{\lambda}_{k} ) - \nabla F_{\mathbf{z}}(X^{\lambda}_{k}) \right\| \left\| \lambda V^{\lambda}_{k}  \right\|  d\tau  } \nonumber\\
	&\le&  \frac{1}{2}M \lambda^2 \|V^{\lambda}_k\|^2. \label{eq:F}
	\end{eqnarray}
	Denoting $\Delta^1_k = V^{\lambda}_{k} - \lambda[\gamma V^{\lambda}_{k} + g(X^{\lambda}_{k},U_{\mathbf{z},k})] $, we compute
	\begin{eqnarray}
	&& \|V^{\lambda}_{k+1}\|^2 \nonumber\\
	&=& \|\Delta^1_k\|^2 + 2 \gamma \beta^{-1} \lambda \| \xi_{k+1} \|^2 + 2 \sqrt{2 \gamma \beta^{-1} \lambda}\left\langle  \Delta^1_k, \xi_{k+1}\right\rangle  \nonumber \\
	&\le& \|V^{\lambda}_{k} - \lambda[\gamma V^{\lambda}_{k} + \nabla F_{\mathbf{z}} (X^{\lambda}_k) ]\|^2 + \lambda^2 \| g(X^{\lambda}_{k},U_{\mathbf{z},k}) - \nabla F_{\mathbf{z}} (X^{\lambda}_k) \|^2 + 2 \gamma \beta^{-1} \lambda \| \xi_{k+1} \|^2 + 2 \sqrt{2 \gamma \beta^{-1} \lambda} \left\langle \Delta^1_k, \xi_{k+1}\right\rangle  \nonumber\\
	&\le& (1-\lambda \gamma)^2  \|V^{\lambda}_{k}\|^2 - 2\lambda(1-\lambda \gamma)\left\langle  \nabla F_{\mathbf{z}} (X^{\lambda}_k), V^{\lambda}_k \right\rangle + \lambda^2 \|F_{\mathbf{z}} (X^{\lambda}_k)\|^2 + \lambda^2 \| g(X^{\lambda}_{k},U_{\mathbf{z},k}) - \nabla F_{\mathbf{z}} (X^{\lambda}_k) \|^2 \nonumber \\
	&& \qquad + 2 \gamma \beta^{-1} \lambda \| \xi_{k+1} \|^2 + 2 \sqrt{2 \gamma \beta^{-1} \lambda} \left\langle  \Delta^1_k, \xi_{k+1}\right\rangle . \nonumber\\
	&\le& (1-\lambda \gamma)^2  \|V^{\lambda}_{k}\|^2 - 2\lambda(1-\lambda \gamma)\left\langle  \nabla F_{\mathbf{z}} (X^{\lambda}_k), V^{\lambda}_k \right\rangle + 3\lambda^2 (M \|X^{\lambda}_k\| +B)^2  + 2 \gamma \beta^{-1} \lambda \| \xi_{k+1} \|^2 + 2 \sqrt{2 \gamma \beta^{-1} \lambda} \left\langle  \Delta^1_k, \xi_{k+1}\right\rangle . \nonumber\\
	\label{eq:V} 
	\end{eqnarray}
	Similarly, we have
	\begin{equation}\label{eq:X}
	\|X^{\lambda}_{k+1}\|^2 = \|X^{\lambda}_{k}\|^2 + 2 \lambda\left\langle X^{\lambda}_{k}, V^{\lambda}_{k} \right\rangle + \lambda^2 \|V^{\lambda}_{k}\|^2. 
	\end{equation}
	Denoting $\Delta^2_k = X^{\lambda}_{k} + \gamma^{-1} V^{\lambda}_{k} - \lambda \gamma^{-1} g(X^{\lambda}_{k},U_{\mathbf{z},k})$, we compute that
	\begin{eqnarray}
	&&\|X^{\lambda}_{k+1} + \gamma^{-1} V^{\lambda}_{k+1} \|^2 \nonumber\\
	&=& \| X^{\lambda}_{k} + \gamma^{-1} V^{\lambda}_{k} - \lambda \gamma^{-1} g(X^{\lambda}_{k},U_{\mathbf{z},k}) + \sqrt{2 \gamma^{-1} \beta^{-1} \lambda} \xi_{k+1} \|^2  \nonumber\\
	&=& \| X^{\lambda}_{k} + \gamma^{-1} V^{\lambda}_{k} - \lambda \gamma^{-1} g(X^{\lambda}_{k},U_{\mathbf{z},k})  \|^2 + 2 \gamma^{-1} \beta^{-1} \lambda  \|\xi_{k+1} \|^2 + 2  \sqrt{2 \gamma^{-1} \beta^{-1} \lambda}\left\langle  \Delta^2_k, \xi_{k+1}\right\rangle  \nonumber\\
	&\le& \| X^{\lambda}_{k} + \gamma^{-1} V^{\lambda}_{k} - \lambda \gamma^{-1} \nabla F_{\mathbf{z}}(X^{\lambda}_{k}) \|^2 + \lambda^2 \gamma^{-2} \| g(X^{\lambda}_{k},U_{\mathbf{z},k}) - F_{\mathbf{z}}(X^{\lambda}_{k}) \|^2 \nonumber \\
	&& \qquad + 2 \gamma^{-1} \beta^{-1} \lambda  \|\xi_{k+1} \|^2 + 2  \sqrt{2 \gamma^{-1} \beta^{-1} \lambda} \left\langle \Delta^2_k, \xi_{k+1}\right\rangle \nonumber\\
	&\le& \| X^{\lambda}_{k} + \gamma^{-1} V^{\lambda}_{k}\|^2 - 2 \lambda \gamma^{-1} \left\langle \nabla F_{\mathbf{z}}(X^{\lambda}_{k}),X^{\lambda}_{k} + \gamma^{-1} V^{\lambda}_{k}\right\rangle  + 3\lambda^2 \gamma^{-2}(M \|X^{\lambda}_{k} \| + B)^2\nonumber\\
	&& \qquad   + 2 \gamma^{-1} \beta^{-1} \lambda  \|\xi_{k+1} \|^2 + 2  \sqrt{2 \gamma^{-1} \beta^{-1} \lambda} \left\langle  \Delta^2_k, \xi_{k+1}\right\rangle . \label{eq:XV}
	\end{eqnarray}
	Let us denote $\mathcal{V}_{k} = \mathcal{V}(X^{\lambda}_k,V^{\lambda}_k)$. From (\ref{eq:F}), (\ref{eq:V}), (\ref{eq:X}) and (\ref{eq:XV}) we compute that 
	\begin{eqnarray}
	&& \frac{\mathcal{V}_{k+1} - \mathcal{V}_{k} }{\beta } \label{eq:VV}\\
	&\le&  \lambda \left\langle \nabla F_{\mathbf{z}}(X^{\lambda}_{k}), V^{\lambda}_k \right\rangle + \frac{1}{2}M \lambda^2 \|V^{\lambda}_k\|^2 \nonumber \\
	&-&  \frac{1}{2} \lambda  \gamma \left\langle \nabla F_{\mathbf{z}}(X^{\lambda}_{k}),X^{\lambda}_{k} + \gamma^{-1} V^{\lambda}_{k}\right\rangle  + \frac{3}{4} \lambda^2 (M \|X^{\lambda}_k\| +B)^2 + \nonumber\\
	&& \qquad +  \frac{1}{2}   \gamma \beta^{-1} \lambda  \|\xi_{k+1} \|^2 + \frac{1}{2}  \gamma^2 \sqrt{2 \gamma^{-1} \beta^{-1} \lambda} \left\langle  \Delta^2_k, \xi_{k+1}\right\rangle \nonumber\\
	&+& \frac{1}{4}(-2\lambda \gamma + \lambda^2 \gamma^2) \|V^{\lambda}_{k}\|^2 - \frac{1}{2}\lambda(1-\lambda \gamma)\left\langle  \nabla F_{\mathbf{z}} (X^{\lambda}_k), V^{\lambda}_k \right\rangle + \frac{3}{4} \lambda^2 (M \|X^{\lambda}_k\| +B)^2 + \nonumber \\
	&& \qquad + \frac{1}{2} \gamma \beta^{-1} \lambda \| \xi_{k+1} \|^2 + \frac{1}{2} \sqrt{2 \gamma \beta^{-1} \lambda} \left\langle \Delta^1_k, \xi_{k+1}\right\rangle \nonumber\\
	&-& \frac{1}{2} \lambda \gamma^2 \lambda_c \left\langle X^{\lambda}_{k}, V^{\lambda}_{k} \right\rangle - \frac{1}{4} \lambda^2 \gamma^2 \lambda_c    \|V^{\lambda}_{k}\|^2 \nonumber \\
	&=& - \frac{1}{2} \lambda \gamma \left\langle \nabla F_{\mathbf{z}}(X^{\lambda}_{k}),X^{\lambda}_{k}\right\rangle  - \frac{1}{2} \lambda \gamma \|V^{\lambda}_k\|^2 - \frac{1}{2} \lambda \gamma^2 \lambda_c \left\langle X^{\lambda}_{k}, V^{\lambda}_{k} \right\rangle + \lambda^2 \mathcal{E}_k \nonumber \\
	&& \qquad + \gamma \beta^{-1} \lambda  \|\xi_{k+1} \|^2 + \Sigma_k,   \nonumber
	\end{eqnarray}
	where 
	\begin{eqnarray*}
		\mathcal{E}_k&:=&\left( \frac{1}{2}M   + \frac{1}{4} \gamma^2 - \frac{1}{4} \gamma^2 \lambda_c\right) \|V^{\lambda}_k\|^2  +  \frac{3}{2} (M \|X^{\lambda}_k\| +B)^2 + \frac{1}{2}\gamma\left\langle  \nabla F_{\mathbf{z}} (X^{\lambda}_k), V^{\lambda}_k \right\rangle,\\
		\Sigma_k &:=& \frac{1}{2}  \gamma^2 \sqrt{2 \gamma^{-1} \beta^{-1} \lambda} \left\langle \Delta^2_k, \xi_{k+1}\right\rangle  + \frac{1}{2} \sqrt{2 \gamma \beta^{-1} \lambda} \left\langle \Delta^1_k, \xi_{k+1}\right\rangle . 
	\end{eqnarray*}
	Using the inequality (\ref{eq:drift}), we obtain
	\begin{eqnarray}
	\frac{\mathcal{V}_{k+1} - \mathcal{V}_{k} }{\beta} &\le& - \lambda \gamma \lambda_c F_{\mathbf{z}}(X^{\lambda}_k) - \frac{1}{4}\lambda \gamma^3 \lambda_c \|X^{\lambda}_k \|^2 + \lambda \gamma A_c/\beta - \frac{1}{2} \lambda \gamma \|V^{\lambda}_k\|^2 - \frac{1}{2} \lambda \gamma^2 \lambda_c \left\langle X^{\lambda}_{k}, V^{\lambda}_{k} \right\rangle + \lambda^2 \mathcal{E}_k \nonumber \\
	&& \qquad + \gamma \beta^{-1} \lambda  \|\xi_{k+1} \|^2 + \Sigma_k.   \label{eq:1}
	\end{eqnarray}
	The quantity $\mathcal{E}_k$ is bounded as follows
	\begin{eqnarray*}
		\mathcal{E}_k &\le& \left( \frac{1}{2}M   + \frac{1}{4} \gamma^2 - \frac{1}{4} \gamma^2 \lambda_c + \gamma \right) \|V^{\lambda}_k\|^2  + M^2(3+2\gamma)\|X^{\lambda}_k\|^2 + B^2 \left(  3+ 2\gamma \right) .  
	\end{eqnarray*}
	As in \cite{gao}, we deduce that
	\begin{eqnarray}
	\mathcal{V}_k/\beta &\ge& \max\left\lbrace \frac{1}{8}(1-2\lambda_c)\gamma^2\|X^{\lambda}_k\|^2, \frac{1}{4}(1-2\lambda_c)\|V^{\lambda}_k\|^2  \right\rbrace \label{eq:VXV} \\
	&\ge& \frac{1}{16}(1-2\lambda_c)\gamma^2\|X^{\lambda}_k\|^2 + \frac{1}{8}(1-2\lambda_c)\|V^{\lambda}_k\|^2. \nonumber
	\end{eqnarray}
	And then we get that 
	\begin{equation}\label{eq:E}
	\mathcal{E}_k \le K_1\mathcal{V}_k/\beta + K_2
	\end{equation}
	where 
	$$K_1 = \max\left\lbrace \frac{M^2(3+2\gamma)}{\frac{1}{16}(1-2\lambda_c)\gamma^2}, \frac{(M/2 + \gamma^2/4 - \gamma^2 \lambda_c /4 + \gamma)}{\frac{1}{8}(1-2\lambda_c)} \right\rbrace, \qquad K_2 = B^2(3 + 2\gamma).$$
	Similarly, we bound $\Sigma_k$, using (\ref{eq:VXV}) and the definitions of $\Delta^1_k, \Delta^2_k$,  
	\begin{eqnarray*}
		\|\Sigma_k\|^2 &\le&  2 \gamma^{3} \beta^{-1} \lambda \|\Delta^2_k\|^2 \|\xi_{k+1}\|^2 + 2 \gamma \beta^{-1} \lambda \|\Delta^1_k\|^2 \|\xi_{k+1}\|^2\\
		&\le& 2 \lambda  \gamma \beta^{-1} \|\xi_{k+1}\|^2 \left(  \gamma^{2}  \|X^{\lambda}_{k} + \gamma^{-1} V^{\lambda}_{k} - \lambda \gamma^{-1} g(X^{\lambda}_{k},U_{\mathbf{z},k})\|^2  +  \|V^{\lambda}_{k} - \lambda[\gamma V^{\lambda}_{k} + g(X^{\lambda}_{k},U_{\mathbf{z},k})]\|^2 \right) \\
		&\le& 2 \lambda  \gamma \beta^{-1} \|\xi_{k+1}\|^2 \left( 3 \gamma^2 \|X^{\lambda}_{k}\|^2 + 3\|V^{\lambda}_{k}\|^2 +3(M\|X^{\lambda}_{k}\| + B)^2 + 2(1-\lambda \gamma)^2 \|V^{\lambda}_{k}\|^2 + 2(M\|X^{\lambda}_{k}\| + B)^2   \right)\\
		&\le&  2 \lambda  \gamma \beta^{-1} \|\xi_{k+1}\|^2 \left( (3 \gamma^2 + 10M^2) \|X^{\lambda}_{k}\|^2 + (3 + 2(1-\lambda \gamma)^2)\|V^{\lambda}_{k}\|^2 + 10B^2   \right).
	\end{eqnarray*}
	and thus 
	\begin{equation}\label{eq:bound=sig}
	\|\Sigma_k\|^2 \le  \left( P_1 \mathcal{V}_k/\beta + P_2\right) \lambda \|\xi_{k+1}\|^2 
	\end{equation}
	where
	$$P_1 = 2\max \left\lbrace \frac{2 \gamma \beta^{-1} (3 \gamma^2 + 10M^2)}{\frac{1}{16}(1-2 \lambda_c) \gamma^2}, \frac{2  \gamma \beta^{-1}(3 + 2(1-\lambda \gamma)^2)}{\frac{1}{8}(1-2 \lambda_c) } \right\rbrace , \qquad P_2 = 20  \gamma \beta^{-1} B^2.$$
	Noting that $\lambda_c \le 1/4,$ we have
	\begin{eqnarray*}
		\mathcal{V}_k/\beta &=& F_{\mathbf{z}}(X^{\lambda}_k) + \frac{1}{4}\gamma^2(1-\lambda_c)\|X^{\lambda}_k\|^2 + \frac{1}{2}\gamma \left\langle X^{\lambda}_k, V^{\lambda}_k \right\rangle + \frac{1}{2} \|V^{\lambda}_k\|^2 \\
		&\le& F_{\mathbf{z}}(X^{\lambda}_k) + \frac{1}{4}\gamma^2\|X^{\lambda}_k\|^2 + \frac{1}{2}\gamma \left\langle X^{\lambda}_k, V^{\lambda}_k \right\rangle + \frac{1}{2\lambda_c} \|V^{\lambda}_k\|^2.
	\end{eqnarray*}
	From (\ref{eq:1}), (\ref{eq:X}) we obtain
	\begin{eqnarray*}
		\frac{\mathcal{V}_{k+1} - \mathcal{V}_{k} }{\beta} &\le& - \lambda \gamma \lambda_c \left( F_{\mathbf{z}}(X^{\lambda}_k) + \frac{1}{4} \gamma^2 \|X^{\lambda}_k \|^2 -   A_c/(\beta \lambda_c) + \frac{1}{2 \lambda_c}  \|V^{\lambda}_k\|^2 + \frac{1}{2} \gamma \left\langle X^{\lambda}_{k}, V^{\lambda}_{k} \right\rangle \right)  +  \nonumber \\
		&& \qquad + \lambda^2 \mathcal{E}_k +  \gamma \beta^{-1} \lambda  \|\xi_{k+1} \|^2 + \Sigma_k \\
		&\le& \lambda \gamma \left(A_c/\beta  - \lambda_c \mathcal{V}_k/\beta \right) + (K_1\mathcal{V}_{k}/\beta + K_2) \lambda^2 + \gamma \beta^{-1} \lambda  \|\xi_{k+1} \|^2 + \Sigma_k.    
	\end{eqnarray*}
	Therefore, for $0< \lambda < \frac{\gamma \lambda_c}{2K_1}$
	$$\mathcal{V}_{k+1} \le \phi \mathcal{V}_{k} + \tilde{K}_{k+1}$$
	where 
	\begin{equation} \label{eq:phi_K}
	\phi:= 1 - \lambda \gamma \lambda_c/2, \qquad \tilde{K}_{k+1}:= \lambda \gamma A_c + \lambda^2\beta K_2 + \lambda \gamma \|\xi_{k+1}\|^2 + \beta \Sigma_{k}.
	\end{equation}
	Define $E_k[\cdot] := E[\cdot|(X^{\lambda}_k,V^{\lambda}_k), \mathbf{Z} = \mathbf{z}]$. We then compute as follows,
	\begin{eqnarray}
	E_k[\mathcal{V}^{2q}_{k+1}] &\le& E_k\left[ \left(| \phi \mathcal{V}_{k}|^2 + 2\phi \mathcal{V}_{k}\tilde{K}_{k+1}  + |\tilde{K}_{k+1} |^2  \right)^q  \right] \nonumber\\
	&&  \qquad = | \phi \mathcal{V}_{k} |^{2q} + 2q | \phi \mathcal{V}_{k} |^{2(q-1)} E_k\left[ \phi \mathcal{V}_{k} \tilde{K}_{k+1} \right]  + \sum_{k=2}^{2q} C^k_{2q} E_k\left[ | \phi \mathcal{V}_{k}|^{2q-k} | \tilde{K}_{k+1}|^k \right] \nonumber\\  
	\label{eq:2p0}
	\end{eqnarray}
	where the last inequality is due to Lemma A.3 of \cite{five}. Denoting $c_{19}:= \gamma A_c + \beta K_2 + \gamma d,$ we continue 
	\begin{eqnarray}
	E_k[\mathcal{V}^{2q}_{k+1}] &\le& | \phi \mathcal{V}_{k} |^{2q} + 2\lambda c_{19}  q | \phi \mathcal{V}_{k} |^{2q-1}     + \sum_{\ell=0}^{2q-2} {2q \choose \ell + 2} E_k\left[ | \phi \mathcal{V}_{k} |^{2q-2 -\ell} |\tilde{K}_{k+1} |^{\ell} |\tilde{K}_{k+1} |^{2} \right] \nonumber\\
	&\le& | \phi \mathcal{V}_{k} |^{2q} + 2\lambda c_{19}  q | \phi \mathcal{V}_{k} |^{2q-1}  + {2q \choose 2} \sum_{\ell=0}^{2q-2} {2q-2 \choose \ell} C^{\ell }_{2q-2} E_k\left[| \phi \mathcal{V}_{k} |^{2q-2 -\ell} |\tilde{K}_{k+1} |^{\ell} |\tilde{K}_{k+1} |^{2} \right] \nonumber\\
	&\le& | \phi \mathcal{V}_{k} |^{2q} + 2\lambda c_{19}  q | \phi \mathcal{V}_{k} |^{2q-1}  +q(2q-1)E_k\left[(| \phi \mathcal{V}_{k} | +  |\tilde{K}_{k+1}|)^{2q-2} |\tilde{K}_{k+1} |^{2}  \right] \nonumber\\
	&\le& | \phi \mathcal{V}_{k} |^{2q} + 2 \lambda c_{19}  q | \phi \mathcal{V}_{k} |^{2q-1} + q(2q-1) 2^{2q-3} | \phi \mathcal{V}_{k} |^{2q-2} E_k[|\tilde{K}_{k+1} |^{2}] +q(2q-1) 2^{2q-3} E_k[|\tilde{K}_{k+1} |^{2q}]. \nonumber \\
	\label{eq:2p}
	\end{eqnarray}
	Clearly we have
	\begin{eqnarray*}
		E_k\|\tilde{K}_{k+1}\|^2 &\le& 3 \lambda (\gamma A_c + \beta K_2)^2 + 3 \lambda \gamma^2 E\|\xi_{k+1}\|^4 +3 \lambda \beta d P_1|\mathcal{V}_k| + 3 \lambda \beta^2 d P_2 ,\\
		E_k\|\tilde{K}_{k+1}\|^{2q} &\le& 2^{2q-1} \lambda E \left(\gamma A_c + \beta K_2 + \gamma \|\xi_{k+1}\|^2 +   \beta \sqrt{P_2}\|\xi_{k+1}\|\right)^{2q} + 2^{2q-1}\lambda \beta^q P^q_1 |\mathcal{V}_k|^q E\|\xi_{k+1}\|^{2q}.   
	\end{eqnarray*}
	Define
	$$\tilde{M}_1:= \max \left\lbrace \frac{ (\gamma A_c + \beta K_2)^2 +  \gamma^2 E\|\xi_{k+1}\|^4+  \beta^2 d P_2}{ \beta d P_1}, \frac{\left( E \left(\gamma A_c + \beta K_2 + \gamma \|\xi_{k+1}\|^2 +   \beta \sqrt{P_2}\|\xi_{k+1}\|\right)^{2q}\right)^{1/q} }{\beta P_1 E^{1/q}\|\xi_{k+1}\|^{2q}  } \right\rbrace.$$
	On $\left\lbrace  \mathcal{V}_k \ge \tilde{M}_1\right\rbrace $ we have
	\begin{eqnarray*}
		E_k\|\tilde{K}_{k+1}\|^2 &\le& 6\lambda \beta d P_1|\mathcal{V}_k|,\\
		E_k\|\tilde{K}_{k+1}\|^{2q} &\le& 2^{2q} \lambda \beta^q P^q_1 |\mathcal{V}_k|^q E\|\xi_{k+1}\|^{2q}.
	\end{eqnarray*}
	And thus
	\begin{eqnarray}
	E_k[\mathcal{V}^{2q}_{k+1}] &\le& \phi  |\mathcal{V}_{k} |^{2q} + 2 \lambda c_{19}  q | \mathcal{V}_{k} |^{2q-1} + 6\lambda  q(2q-1) 2^{2q-3} \beta dP_1 | \mathcal{V}_{k} |^{2q-1} + \lambda q(2q-1) 2^{4q-3} \beta^q P^q_1 E\|\xi_{k+1}\|^{2q} | \mathcal{V}_{k} |^{q} \nonumber \\
	&=& (1 - \lambda \gamma \lambda_c/4) \mathcal{V}^{2q}_{k} \nonumber \\
	&-& \lambda \gamma \lambda_c/12 \mathcal{V}^{2q}_{k} + 2 \lambda c_{19}  q |  \mathcal{V}_{k} |^{2q-1} \nonumber\\
	&-& \lambda \gamma \lambda_c/12  \mathcal{V}^{2q}_{k} + 6\lambda  q(2q-1) 2^{2q-3} \beta dP_1 | \mathcal{V}_{k} |^{2q-1} \nonumber \\
	&-& \lambda \gamma \lambda_c/12  \mathcal{V}^{2q}_{k} +  \lambda q(2q-1) 2^{4q-3}  \beta^q P^q_1 E\|\xi_{k+1}\|^{2q} | \mathcal{V}_{k} |^{q}. \nonumber \\
	\label{eq:2p2} 
	\end{eqnarray}
	If we choose 
	$$\tilde{M}:= \max \left\lbrace  \tilde{M}_1, \frac{24 c_{19}q}{\gamma \lambda_c}, \frac{72q(2q-1) 2^{2q-3} \beta dP_1}{\gamma \lambda_c}, \left(\frac{ 12q(2q-1)2^{4q-3}\beta^q P^q_1E\|\xi_{k+1}\|^{2q}}{\gamma \lambda_c}\right)^{1/q} \right\rbrace  $$
	then on $\{ V_k \ge \tilde{M}\}$, the second, the third and the fourth term in the RHS of (\ref{eq:2p2}) are bounded by $0$ and then  
	\begin{eqnarray*}
		E_k[\mathcal{V}^{2q}_{k+1}] &\le& (1 - \lambda \gamma \lambda_c/4) \mathcal{V}^{2q}_{k}.
	\end{eqnarray*}
	On $\{ \mathcal{V}_{k} < \tilde{M} \}$, we have
	\begin{eqnarray*}
		E_k[\mathcal{V}^{2q}_{k+1}] &\le& (1 - \lambda \gamma \lambda_c/4)\mathcal{V}^{2q}_{k} + \lambda \tilde{N},
	\end{eqnarray*}
	where $\tilde{N}=2 c_{19}  q \tilde{M}^{2q-1} + 6 q(2q-1) 2^{2q-3} \beta dP_1 \tilde{M}^{2q-1} +  q(2q-1) 2^{4q-3} \beta^q P^q_1 E\|\xi_{k+1}\|^{2q} \tilde{M}^{q} $. For sufficiently small $\lambda$, we get from these bounds
	$$ E[\mathcal{V}^{2q}_k] \le  (1 - \lambda \gamma \lambda_c/4)^k\mathcal{V}^{2q}_0 + \frac{4\tilde{N}}{\gamma \lambda_c}.$$
	The proof is complete by using (\ref{eq:VXV}).
\end{proof}
\subsection{Explicit dependence of constants on important parameters}
Similar to \cite{gao}, we choose $\mu_0$ in such a way that 
$$\int_{\mathbb{R}^{2d}}\mathcal{V}(x,v)d\mu_0(dx,dv) = \mathcal{O}(\beta), \qquad \int_{\mathbb{R}^{2d}}e^{\mathcal{V}(x,v)}d\mu_0(dx,dv) = \mathcal{O}(e^{\beta}).$$
Then we get $C^c_x = C^c_v = C^a_x = C^a_v = \mathcal{O}((\beta + d)/\beta).$ It follows that
$$c_2 = c_3 = c_7 = c_{16} = \mathcal{O}(\sqrt{(\beta + d)/\beta}).$$
It is checked that 
$$A_c = \mathcal{O}(\beta), \qquad  \alpha_c = \mathcal{O}(1), \qquad  \Lambda_c = \mathcal{O}(\beta + d), \qquad R_1 = \mathcal{O}(\sqrt{1 + d/\beta}),$$ and
\begin{eqnarray*}
	c_* &=& \mathcal{O}(\sqrt{\beta+d}e^{-\mathcal{O}(\beta+d)}), \\
	C_* &=&\mathcal{O}\left( e^{\Lambda_c/p} \left(  R_1^{p-3} \frac{d+\beta}{\beta c_*} \right)^{1/p} \right) = \mathcal{O}\left( \frac{(d+\beta)^{1/2 - 1/(2p)}}{\beta^{1/2 - 1/(2p)}} \frac{e^{2\Lambda_c/p}}{\Lambda_c^{1/(2p)}} \right) \\
	&=& \mathcal{O}\left( \frac{(d+\beta)^{1/2 - 1/(2p)}}{\beta^{1/2 - 1/(2p)}} \left(  \frac{e^{\Lambda_c}}{\Lambda_c^{1/2}} \right)^{2/p} \Lambda^{1/(2p)}_c \right) = \mathcal{O}\left( \frac{(d+\beta)^{1/2}}{\beta^{1/2 - 1/(2p)} c^{2/p}_*} \right).   
\end{eqnarray*}
The constant $c_*, C_*$ are $\mu_*$ and $C$ respectively in \cite{gao}. In addition, we check
\begin{eqnarray*}
	c_{19} = \mathcal{O}(d+\beta), \qquad \tilde{M} = \tilde{M}_1 = \mathcal{O}((d + \beta)^2/d),\\
	\tilde{N} = \mathcal{O}\left( \frac{(d+\beta)^{4q-1}}{d^{2q-1}} \right), \qquad c_{18} = \mathcal{O}((d+\beta)^{3/2}/d^{1/2})
\end{eqnarray*}
and hence
$$\tilde{C} = \mathcal{O}\left( \frac{(d+\beta)^{1/2 + 2/p}}{\beta^{1/2} d^{1/(2p)} }  \frac{e^{-c_*}}{c^{2/p}_*(1-e^{-c_*})}   \right).$$
From Lemma 16 of \cite{gao}, we get
$$\mathcal{W}_p (\mu_0,\pi_{\mathbf{z}}) = \mathcal{O}\left(\sqrt{\frac{\beta+d}{\beta}}\right). $$
Furthermore, it is observed that 
$$\sigma = \mathcal{O}\left( \frac{(d+\beta)^{1-1/(4q)}}{d^{1/2-1/(4q)}} \right) .$$ 
Therefore, for a fixed $k$, the term $\mathcal{B}_1$ is bounded by 
\begin{eqnarray*}
	\mathcal{B}_1 &=& \mathcal{O}\left( \frac{(d+\beta)^{3/2 + 2/p - 1/(4q)}}{\beta^{1/2} d^{1/2+1/(4q)} }  \frac{e^{-c_*}}{c^{2/p}_*(1-e^{-c_*})}    \right)(\lambda^{1/(2p)} + \delta^{1/(2p)}) \\
	&&+ \mathcal{O}\left( \frac{(d+\beta)^{3/2 + 1/(2p) - 1/(4q)}}{d^{1/2 - 1/(4q)}\beta^{1/2} c^{2/p}_*} \right) e^{-c_*k\lambda}.
\end{eqnarray*}
Since $c_*$ is exponentially small in $(\beta+d)$, our bound for $\mathcal{B}_1$ is worse than that of $\mathcal{J}_1(\varepsilon) + \overline{\mathcal{J}}_0(\varepsilon)$ given in \cite{gao}.

\section*{Acknowledgments}
Both authors were supported by the NKFIH (National Research, Development and Innovation Office, Hungary) grant KH 126505 and the ``Lend\"ulet'' grant LP 2015-6 of the Hungarian Academy of Sciences. The authors thank Minh-Ngoc Tran for helpful discussions.

\bibliography{references}
\end{document}